\documentclass{article}
\usepackage{iclr2027_conference,times}

\usepackage{amsmath,amsfonts,bm}

\def\eqref#1{equation~\ref{#1}}

\def\1{\bm{1}}

\DeclareMathAlphabet{\mathsfit}{\encodingdefault}{\sfdefault}{m}{sl}
\SetMathAlphabet{\mathsfit}{bold}{\encodingdefault}{\sfdefault}{bx}{n}

\usepackage{hyperref}
\usepackage{url}

\usepackage{aas_macros}
\usepackage{graphicx}
\usepackage{subcaption}
\usepackage{xcolor}
\usepackage{wrapfig}
\usepackage{booktabs}
\usepackage{amsthm}
\newtheorem{theorem}{Theorem}

\iclrfinalcopy

\title{ENFR: Evidence Networks with Flow-matched References for Amortized Simulation-Based Model Inference}

\author{%
    Kai Lehman$^{1,2,3}$,
    Jameson Dong$^{4}$,
    Carolina Cuesta-Lazaro$^{2,5}$,
    Niall Jeffrey$^{6, 7}$,\\
    \bfseries
    Benjamin D. Wandelt$^{4,8}$,
    Francisco Villaescusa-Navarro$^{2,9}$,
    Sven Krippendorf$^{10}$,
    Jochen Weller$^{1,11,3}$
    \\
    $^1$Universit\"ats-Sternwarte M\"unchen, Fakult\"at f\"ur Physik,\\ Ludwig-Maximilians-Universit\"at, Scheinerstr. 1, 81679 M\"unchen, Germany \\
    $^2$Center for Computational Astrophysics, Flatiron Institute, 162 5th Avenue, New York, NY, 10010, USA\\
    $^3$Excellence Cluster ORIGINS, Boltzmannstr. 2, 85748 Garching, Germany \\
    $^4$Department of Physics and Astronomy, 
    Johns Hopkins University,\\
    3400 North Charles Street, Baltimore, MD, 21218, USA \\
    $^5$Center for Cosmology and Particle Physics, Department of Physics,
    New York University, \\
    726 Broadway, New York, NY 10003, USA\\
    $^6$ Department of Physics \& King’s Institute for Artificial Intelligence, King’s College London,\\ Strand, London WC2R 2LS, United Kingdom\\
    $^7$ Department of Physics \& Astronomy, University College London, Gower Street, London, WC1E 6BT, UK \\
    $^8$Department of Applied Mathematics and Statistics, Johns Hopkins University,\\ 3400 North Charles Street, Baltimore, MD, 21218, USA\\
    $^9$Department of Astrophysical Sciences, Princeton University, Princeton, NJ 08544, USA\\
    $^{10}$Cavendish Laboratory and Department of Applied Mathematics
    and Theoretical Physics, \\
    University of Cambridge, Cambridge CB3 0WA, United Kingdom\\
    $^{11}$Max-Planck-Institut f\"ur extraterrestrische Physik, Giessenbachstr.~1, 85748 Garching, Germany\\
    \texttt{kai.lehman@physik.lmu.de}
}

\begin{document}

\maketitle

\fancyhead[L]{Preprint}

\begin{abstract}
    The evidence, $\mathcal{Z} = p(x | \mathcal{M})$, of a data-generating model $\mathcal{M}$ is a powerful tool in Bayesian frameworks: it enables model inference, model misspecification tests, and goodness-of-fit assessment as an alternative to standard discrepancy statistics, e.g. $\chi^2$. While versatile, the evidence is notoriously difficult to compute, even when the likelihood is available. In both likelihood-based and simulation-based inference, estimating $\mathcal{Z}$ provides an immediate solution to high-dimensional goodness-of-fit or model misspecification tests: using $\mathcal{Z}(x)$ as the test statistic induces a minimum-volume level-$\alpha$ test. Such tests are essential to assess model fidelity, which is necessary to trust simulation-based parameter inference. Existing learned approaches largely target Bayes factors, which only measure evidence ratios. Instead, we provide accurate, fast, and amortized estimates of the absolute evidence. We present ENFR: \textit{Evidence Networks with Flow-matched References}. Building on the Evidence Network framework of \citet{jeffrey_24}, the proposed method first learns Bayes factors with respect to a flow-matched reference, from which the absolute evidence is recovered. We show improvement in estimation of $\mathcal{Z}$ over the flow-based baseline in all applications considered and the best performance of all density estimation methods considered in our benchmarks. We furthermore introduce and provide the proof for an architecture that jointly learns the individual evidences of a set of models $\{\mathcal{M}_i\}$. We demonstrate the performance and scaling behavior of these approaches on data from a variety of scientific applications.
\end{abstract}

\section{Introduction}
Science is fundamentally tasked with modeling nature from first principles, and increasingly this modeling is done with high-fidelity simulators that produce samples from a likelihood that cannot itself be evaluated~\citep[e.g.][]{geant_03,mesinger_11,villaescusa_21}. Simulation-based inference (SBI) has changed how such models are confronted with data: instead of requiring a tractable likelihood, it enables likelihood-free inference directly from simulator samples~\citep[][]{papamakarios_16,alsing_18,brehmer_20,cranmer_20}. These methods have already unlocked information in the data that was previously inaccessible~\citep[e.g.][for recent scientific applications]{wietersheim_25,thomsen_26}, and are being readily adopted across scientific disciplines.

The focus of SBI so far has been parameter estimation. What we ultimately care about in science, however, is assessing the relative fits of models when confronted to data. In a Bayesian framework this is measured by the evidence $p(x|\mathcal{M})$, which is the probability that the observed data $x$ has been produced by a physical or statistical model $\mathcal{M}$. In parameterized settings it is common to express the evidence as the marginal likelihood using the model parameters $\theta$:
\begin{equation}
    p(x|\mathcal{M}) = \int p(x|\theta,\mathcal{M}) p(\theta|\mathcal{M}) d\theta,
\end{equation}
where $p(\theta|\mathcal{M})$ denotes the prior on the parameters and $p(x|\theta,\mathcal{M})$ is the model likelihood. The evidence serves this purpose in two ways. For model comparison, we are interested in the Bayes factor $\log K= \log\frac{p(x|\mathcal{M})}{p(x|\mathcal{M}_0)}$ with respect to some reference model $\mathcal{M}_0$. Beyond model comparison, the absolute evidence enables powerful tests of the data such as model misspecification tests, and an alternative to $\chi^2$ goodness-of-fit tests in high dimensions that does not require summary statistics.

Estimating the evidence has not been the focus of SBI, and the methods to do so are correspondingly less developed. Traditionally, computing the evidence has been a daunting task as the naive estimation of the integral is quickly intractable in high dimensions. Computationally heavy methods such as nested sampling~\citep[e.g.][]{feroz_09,handley_15,speagle_20} have been developed for these cases, but they require the ability to evaluate the likelihood, and even then have seen limited use in scientific applications due to computational infeasibility. How to evaluate the model evidence and Bayes factors when the likelihood is intractable but a simulator is available has remained a largely underexplored topic.

\cite{jeffrey_24} presented the application of neural network based density ratio estimators to learn Bayes factors in this regime. We build on this previous work and introduce ENFR, Evidence Networks with Flow-matched References, which combines flow-based models and evidence networks: using a generative flow as the reference model, we obtain not only Bayes factors but precise estimates of the absolute evidence. We furthermore introduce a new loss function to jointly estimate the Bayesian evidence and Bayes factors of multiple models in Section~\ref{sec:methods}.

Alongside the method, we introduce an evaluation framework built on scientifically relevant applications from a range of disciplines in Section~\ref{sec:examples}. It includes cases with a known likelihood, where the evidence can be estimated with nested sampling and our estimates can therefore be tested accurately against a ground truth, as well as simulation-based cases where only the calibration of the Bayes factors can be assessed. Across these applications, ENFR performs best of all learned estimators considered.

\section{Related work}

\paragraph{Generative models for evidence estimation} Flow based generative models can both obtain samples from the target distribution and evaluate its density. 
In recent years, generative models such as flow matching~\citep{lipman_23} or score-based diffusion~\citep{song_19,song_21} have been successfully pushed into very high-dimensional regimes. However, their primary success criterion is often sample quality. An exact evaluation of the density is often not an explicit goal and its quality is not a main concern. In cases where the estimated density is evaluated as a metric, it is usually maximized, without a ``true'' value to aim for or any interpretable measure for its quality. For evidence estimation, however, the evaluation of the density is the final goal without any concerns of sample quality. Given the optimization objective in those two generative models, the velocity field and the score, it seems clear that these models do not prioritize the evaluation of the likelihood. 

\paragraph{Flow matching} In this work we consider flow matching~\citep[][]{lipman_23}, learning the velocity field $v_\phi = \frac{dx}{dt}$ when mapping from samples of a known noise distribution $z\sim p(z)$ to data samples $x$ with learnable parameters $\phi$ and $t$ parameterizing the path. Throughout this work we consider linear interpolants for the flow, i.e. $x_t = t x + (1-t) z$. The density can then be obtained via integration from the continuity equation (see Appendix~\ref{app:flow}).

\paragraph{Evidence Networks}

Given a reference model $\mathcal{M}_0$, one can use classifiers to learn a density ratio, or equivalently, the Bayes factor $\log K$ \citep{jeffrey_24}. This can be achieved through a variety of loss functions~\citep{masnadi_08}.  \citet{jeffrey_24} recommend an exponential loss function which is given by:
\begin{equation}
    \mathcal{L}(x,m) = e^{(\frac{1}{2}-m) \mathcal{J}_\alpha(f_\psi(x))}
\end{equation}
where $m\in\{0,1\}$ denotes the model label (e.g. $m=1$ if $x\sim p(x|\mathcal{M})$ and $m=0$ if $x\sim p(x|\mathcal{M}_0)$) and $f_\psi$ is a neural network with trainable parameters $\psi$. $\mathcal{J}_\alpha$ denotes the l-POP transform defined as $\mathcal{J}_\alpha(x) = x + \mathrm{sgn}(x) |x|^\alpha$
which provides for more accurate computation of extreme evidence ratios than cross-entropy loss. We explicitly show the minimizer of this loss to be the Bayes factor in Appendix~\ref{app:lpop}. The hyperparameter $\alpha$ can be chosen freely but is recommended by~\cite{jeffrey_24} to be set to $\alpha=2$. We follow this recommendation and will refer to this architecture as an evidence network (EN). As we are fundamentally interested in evidences as opposed to Bayes factors, we need to choose a reference model with known density to reconstruct an evidence: $\log q_\psi(x|\mathcal{M}) = \log K_\psi + \log p(x|\mathcal{M}_0)$.\newline
The choice of the reference model can be critical for good evidence estimation: evidence networks suffer from a well-known failure mode in the case where the distance between the two distributions $p(x|\mathcal{M})$ and $p(x|\mathcal{M}_0)$ is very large~\citep{rhodes_20}. This is known in the literature as the density chasm problem. In the following we describe approaches to mitigate this issue.

\paragraph{Telescoping ratio estimation}

As a solution to the density chasm problem, an iterative procedure has been proposed by \cite{rhodes_20}, in which intermediate waymark distributions are constructed, to produce an array of reference distributions and to learn the Bayes factors (or equivalently, density ratios) between consecutive waymark distributions. Choosing a known noise density $p(z)$, enables the reconstruction of the initial density. While~\cite{rhodes_20} have applied this also to energy based modelling with a flow-based noise distribution they do not consider controlled density-estimation benchmarks in which the reconstructed target density can be compared directly to its known normalized pointwise density. Furthermore our work here differs in architecture.

\paragraph{Time Score Matching (CTSM)} A related continuous formulation is given by~\citet{choi_22,yu_25,chen_25}, in which neural networks learn the time score $s(x,t) = \partial_t \log p(x_t)$. These models are in training similar to flow-matching, i.e. the target and noise distribution are connected with some interpolant.
The learned time score can then be integrated to obtain a density estimate: $\log q(x) = \log p(z) + \int_0^1 s(x,t)\, \mathrm{d}t$.
We train these models using the conditional time score matching objective (CTSM)~\citep[for more details see][]{yu_25} with either the variance preserving interpolant (VP) or the Schr\"odinger bridge interpolant (SB) (we show the interpolants in Appendix~\ref{app:ctsm}). We note that while these methods have been primarily formulated for density ratio estimation, we evaluate them here as density estimators since we are fundamentally interested in evidences rather than Bayes factors. In our experiments, we have found the variance of the target to be very high close to the data distribution and to be limiting for the performance of this method. We also note, that under imperfect training, the resulting probability density function is not guaranteed to be normalized as opposed to flow-based generative models. In our numerical experiments we compare with CTSM-VP, and CTSM-SB with a learned flow matching reference distribution, which we call CTSM-SB + FlowRef.

\paragraph{Featurized Density Ratio Estimation} Another related approach is featurized density ratio estimation~\citep{choi_21} which uses an invertible flow to learn a common feature space of samples from two densities and to approximate the density ratio in this feature space. Again our work differs as we are not only interested in density ratios, but also actual densities. 

\paragraph{Flow contrastive estimation} \cite{gao_19} propose flow contrastive estimation, which jointly trains an energy-based model and a normalizing flow through a noise-contrastive binary-classification objective, with the flow serving as an adaptive noise distribution. The flow is brought iteratively closer to the data distribution. While they do present density estimation results, they do not fully benchmark their density estimation for pointwise ground-truth examples beyond two dimensions. They furthermore show that maximizing the likelihood and improving sample quality do not necessarily go together. Their method differs from ours both in architectural setup and training method.

\section{ENFR: Evidence Networks with Flow-Matched References} \label{sec:methods}
\begin{figure}[h]
\begin{center}
    \includegraphics[width=\textwidth]{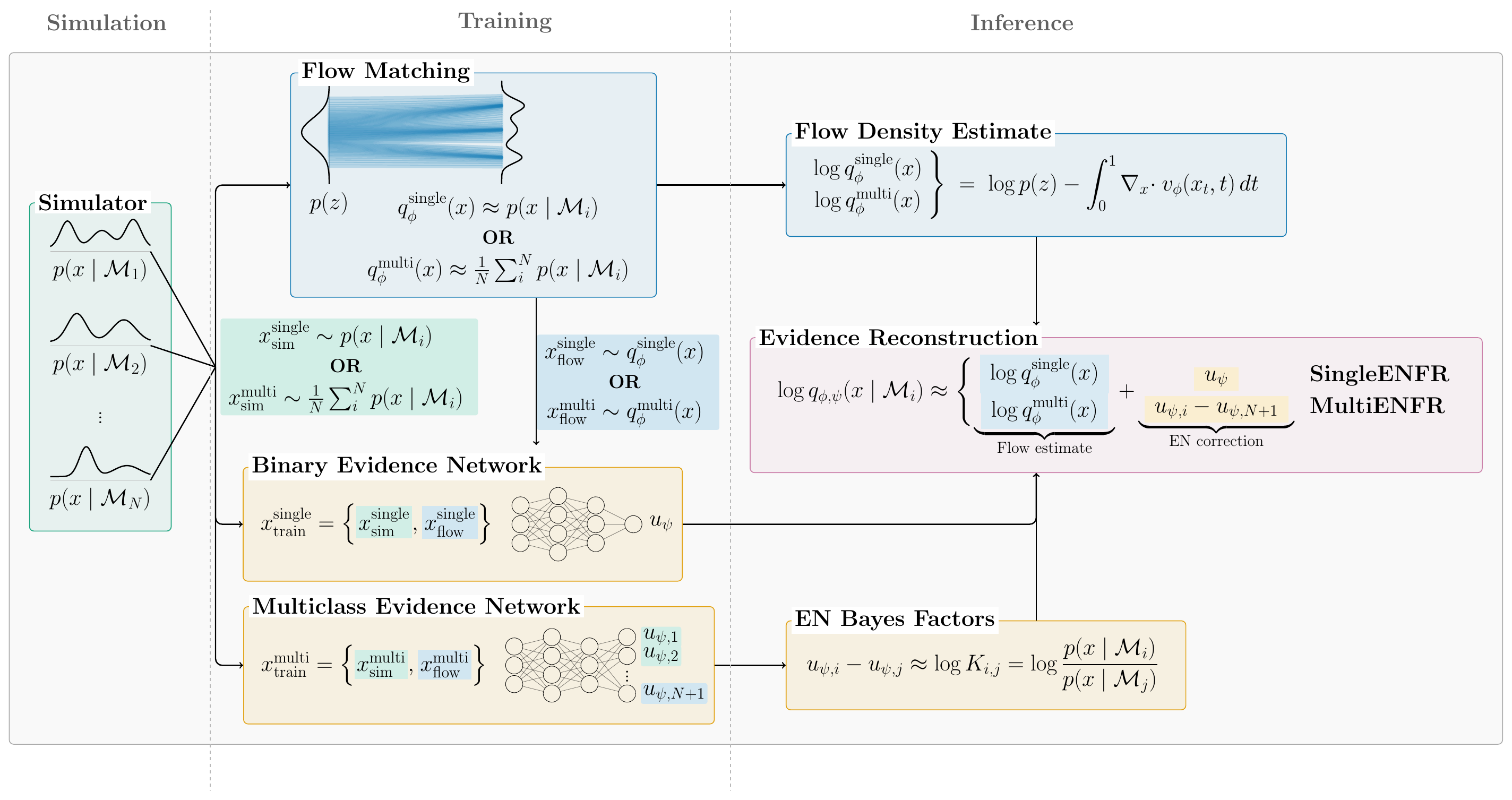}
\end{center}
\caption{A visualization of our proposed SingleENFR and MultiENFR architectures. Given simulated data from one or multiple models, we train flow matching once over all models. In a second step, we use the simulated data and samples from the flow to train an evidence network classifier, which learns all Bayes factors. Those can be combined with the evaluated flow density to obtain the evidence under a given model.}
\label{fig:cartoon}
\end{figure}

\paragraph{Evidence Networks with Flow-matched References: ENFR}
Given that ENs are a powerful method for the estimation of the evidence if a fitting reference model is present, we propose to learn such a reference using a generative model (see Figure~\ref{fig:cartoon}). We choose flow matching for this task. The EN with learnable parameters $\psi$ then only has to learn a correction $\log K_\psi$ from the flow estimate to the true underlying density by discriminating simulations from flow samples:
\begin{equation}
    \log p(x|\mathcal{M})= \log q_\phi(x|\mathcal{M}) + \log \frac{p(x|\mathcal{M})}{q_\phi(x|\mathcal{M})} \approx \log q_\phi(x|\mathcal{M}) + \log K_\psi.
\end{equation}
In this setup, we can benefit from the fact that we only need accurate sampling from the flow to generate half of the training data for the evidence network. The flow density has to be evaluated only at inference time. We call this setup SingleENFR.

We show that when the flow is already a very good fit to the data and no informative density-ratio correction is available, the population-optimal constant correction is zero under balanced training in Appendix~\ref{app:en_const}. While we could use any evidence network loss function for the estimation of $\log K$, we follow the recommendation of~\cite{jeffrey_24} and choose l-POP. We note that in a similar fashion, one can use a pre-trained flow as the base distribution for CTSM using the SB bridge which has been used in~\cite{chen_25}. We also compare to that case in our experiments, calling it CTSM-SB + FlowRef.

\paragraph{ENFR with multiple models (MultiENFR)}

We further introduce a more streamlined way to obtain an array of Bayesian evidences for a set of multiple models based on the l-POP loss function which we call MultiENFR (see Figure~\ref{fig:cartoon}). Given $N$ such models, we introduce the pairwise loss:
\begin{equation}\label{equ:multiloss}
    \mathcal{L}(x, m) = \frac{1}{N-1} \sum_{i\neq m} e^{-\frac{1}{2}(\mathcal{J}_\alpha(f_{\psi,m}(x))-\mathcal{J}_\alpha(f_{\psi,i}(x)))}.
\end{equation}
\begin{theorem}[Asymptotic recovery of Bayes factors]
\label{thm:caten_bayes_factor}
Consider \(N\geq2\) models \(\{\mathcal M_i\}_{i=1}^N\) with evidences \(p_i(x)\equiv p(x\mid\mathcal M_i)\), sampled with equal probability \(p(\mathcal M_i)=1/N\). Let \(f_{\psi,i}(x)\) denote the \(i\)-th output of the multiclass evidence network and let \(\mathcal J_\alpha\) be a fixed scalar link function, with transformed outputs \(u_{\psi,i}(x)\equiv \mathcal J_\alpha(f_{\psi,i}(x))\).

Suppose the network is trained with the pairwise exponential loss
\(\mathcal L(x,m)=\frac{1}{N-1}\sum_{j\neq m}\exp[-\frac12(u_{\psi,m}(x)-u_{\psi,j}(x))]\) (Equation~\ref{equ:multiloss}).
Assume that the densities \(p_i(x)\) are positive almost everywhere on the common support, that the population risk is finite, that the transformed network class is sufficiently expressive to represent \(u_i(x)=\log p_i(x)+c(x)\) for some model-independent function \(c(x)\), and that asymptotic training reaches a global minimizer of the population risk.

Then every global population-risk minimizer satisfies \(u_i^*(x)=\log p(x\mid\mathcal M_i)+c(x)\) almost everywhere, where \(c(x)\) is independent of the model index. Consequently, for any pair of models \(\mathcal M_i\) and \(\mathcal M_j\),
\begin{equation}
\label{eq:caten_bayes_factor}
\mathcal J_\alpha\!\left(f_{\psi^*,i}(x)\right)
-
\mathcal J_\alpha\!\left(f_{\psi^*,j}(x)\right)
=
\log\frac{p(x\mid\mathcal M_i)}
         {p(x\mid\mathcal M_j)}
=
\log K_{ij}(x).
\end{equation}
\end{theorem}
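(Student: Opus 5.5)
The plan is to reduce the population risk to a pointwise minimization problem and solve it by a convexity argument. With equal class priors, the population risk is
\begin{equation*}
R[u]=\frac{1}{N(N-1)}\int \sum_{m}\sum_{j\neq m} p_m(x)\, e^{-\frac12(u_m(x)-u_j(x))}\,dx .
\end{equation*}
No constraint couples different $x$, since the network class is assumed expressive enough. So a global minimizer must minimize the integrand
\begin{equation*}
g_x(u)=\sum_{m\neq j} p_m e^{-\frac12(u_m-u_j)}
\end{equation*}
for almost every $x$; otherwise one could improve it on a set of positive measure. I will therefore fix $x$, drop it from the notation, and minimize $g(u)$ over $u\in\mathbb R^N$.

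\textbf{Invariance and reparametrization.} First I note that $g$ is invariant under $u\mapsto u+c\mathbf 1$. This is why only differences, i.e.\ Bayes factors, can be identified, and why the free function $c(x)$ appears in the statement. Next I substitute $w_i=e^{u_i/2}$. Pairing the terms $(m,j)$ and $(j,m)$ gives
\begin{equation*}
g=\sum_{m<j}\Big(p_m\frac{w_j}{w_m}+p_j\frac{w_m}{w_j}\Big).
\end{equation*}
Setting $\partial g/\partial u_k=0$ yields the stationarity condition
\begin{equation*}
\sum_{j\neq k}\Big(p_j e^{-\frac12(u_j-u_k)}-p_k e^{-\frac12(u_k-u_j)}\Big)=0 .
\end{equation*}
I then verify directly that $u_k=\log p_k+c$ satisfies it. In that case $p_je^{-\frac12(u_j-u_k)}=\sqrt{p_jp_k}$, and the second term $p_k e^{-\frac12(u_k-u_j)}$ equals the same quantity, so each summand vanishes. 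This holds for $N\ge2$ and requires $p_k>0$, which is given almost everywhere.

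\textbf{Uniqueness (main obstacle).} It remains to show that this stationary point is the unique global minimizer modulo the constant shift. I will use convexity: $g$ is a positive combination of exponentials of linear forms $\frac12(u_j-u_m)$, so it is convex in $u$, and every stationary point is a global minimizer. For uniqueness up to $\mathbf 1$-shifts I examine the Hessian
\begin{equation*}
H=\sum_{m\neq j} p_m e^{-\frac12(u_m-u_j)}\,\tfrac14 (e_m-e_j)(e_m-e_j)^\top .
\end{equation*}
This is a weighted graph Laplacian on the complete graph with strictly positive weights, because all $p_m>0$. Its kernel is therefore exactly $\mathrm{span}(\mathbf 1)$, so $g$ is strictly convex on the quotient $\mathbb R^N/\mathrm{span}(\mathbf 1)$. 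Strict convexity gives uniqueness of the minimizer modulo constants. Existence is already supplied by the explicit solution, and finiteness of the risk ensures the integrand is finite almost everywhere.

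\textbf{Conclusion.} Returning to the global problem, a minimizer $u^*$ must agree almost everywhere with $\log p_i(x)+c(x)$, where $c(x)$ is an arbitrary model-independent function. The expressivity assumption guarantees that this function is attainable, so the pointwise infimum is actually achieved by the network class. Subtracting the $i$-th and $j$-th components cancels $c(x)$ and gives Equation~\ref{eq:caten_bayes_factor}.

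The step needing the most care is the passage from a global minimizer to pointwise minimizers almost everywhere. I would argue it by contradiction: suppose $u^*$ fails to minimize $g_x$ on a set of positive measure. Replacing $u^*$ on that set by the pointwise minimizer, which is measurable since it is explicit and representable, strictly lowers the integral, contradicting global optimality.
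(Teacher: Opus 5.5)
Your proof is correct, but it reaches the key step by a different route than the paper.

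The paper first adds the $\mathbf u$-independent diagonal terms $i=j$, so that the integrand factorizes as $A(x)B(x)$ with $A=\sum_i p_i e^{-u_i/2}$ and $B=\sum_i e^{u_i/2}$. The first variation then gives the stationary family $u_k=\log p_k+\log(B/A)$. Global optimality follows from Cauchy--Schwarz, $AB\ge(\sum_i\sqrt{p_i})^2$, whose equality case $\sqrt{p_i}\,e^{-u_i/4}=\lambda e^{u_i/4}$ is exactly $u_i=\log p_i+c$.

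You instead use two different ingredients:
\begin{itemize}
\item convexity of $g$, as a positive combination of exponentials of linear forms, to promote the stationary point to a global minimum;
\item the weighted complete-graph Laplacian structure of the Hessian, to get uniqueness modulo $\mathbf 1$-shifts.
\end{itemize}

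\textbf{What each route buys.} The Cauchy--Schwarz argument is shorter. A single inequality gives both the minimal value of $AB$ and the complete characterization of its minimizers, with no second-order computation. Your argument is more structural. It shows the risk is convex in the transformed outputs in function space, so there are no spurious stationary points. It also makes the non-identifiability of $c(x)$ visible as the Laplacian kernel $\mathrm{span}(\mathbf 1)$. Because it does not rely on the complete-graph factorization $A\cdot B$, it would apply verbatim to a loss summing only over the pairs of a connected comparison graph.

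\textbf{The almost-everywhere step.} You make explicit the passage from a global minimizer to pointwise minimizers almost everywhere. The paper leaves this implicit, even though the claim about \emph{every} global minimizer needs it. One tightening is needed there. If the global minimum is taken over the network class, your patched function (equal to $u^*$ off the bad set) need not be representable. Compare directly with $u^\dagger_i=\log p_i$ instead. It is representable by assumption, attains the pointwise infimum everywhere, and has finite risk, since $\int\sum_{m\neq j}\sqrt{p_mp_j}\,dx\le N-1$. Hence $R[u^\dagger]<R[u^*]$ whenever $u^*$ fails to be pointwise optimal on a set of positive measure.
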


\begin{proof}
See Appendix~\ref{app:caten}.
\end{proof}

In order to handle the density chasm problem, we introduce one more class, which is given by synthetic data samples from a flow $q_\phi^\mathrm{multi}$ with trainable parameters $\phi$. The generating flow is pre-trained on simulated data from all the other models:  $x\sim \frac{1}{N} \sum_{n=1}^N p(x|\mathcal{M}_n)$. We note that this is a fundamentally different flow density target than in SingleENFR, as the result is no longer an evidence estimate in itself. It is rather an estimate of the mixture of the evidences of many models. We use this flow reference as an additional model in our multiclass evidence network: there are therefore $N+1$ classes to discriminate. We generate an equal share of training data from this pretrained flow. The learned Bayes factor with respect to the flow class along with the evaluated flow density then delivers the evidence $\log p(x|\mathcal{M}_i) = \log K_{i,N+1} + \log q_\phi(x)$.

\section{Results} \label{sec:examples}

In this section we benchmark the performance of our proposed ENFR architecture and compare them to other learned evidence estimates on a variety of different scientific applications.

\subsection{Analytical evidence: Gaussian mixture model}

We begin with a Gaussian mixture model (GMM) where the true evidence is analytically known and we compare root mean square errors (RMSE) of our simulation-based estimates with respect to the ground truth. In the known ground truth case we generate data from a $d$‑dimensional correlated Gaussian mixture model (GMM) with three components. The details of the experimental setup are given in Appendix~\ref{app:GMM}. 

Figure~\ref{fig:scaling} shows that in our scaling tests the SingleENFR setup consistently outperforms the other tested models. We also find that CTSM is highly outperformed by all other methods. Using CTSM with a flow matched reference model, does not improve the flow significantly in all combinations tested here. A comparison of the RMSE values of the evidence estimates with respect to the ground truth is given in Table~\ref{tab:GMM}. We also find that our model learns only minor improvements starting at 512 dimensions. \newline

\begin{figure}
\centering

\makebox[\textwidth][c]{%
\begin{subfigure}[b]{.4\textwidth}
    \centering
    \includegraphics[width=\textwidth]{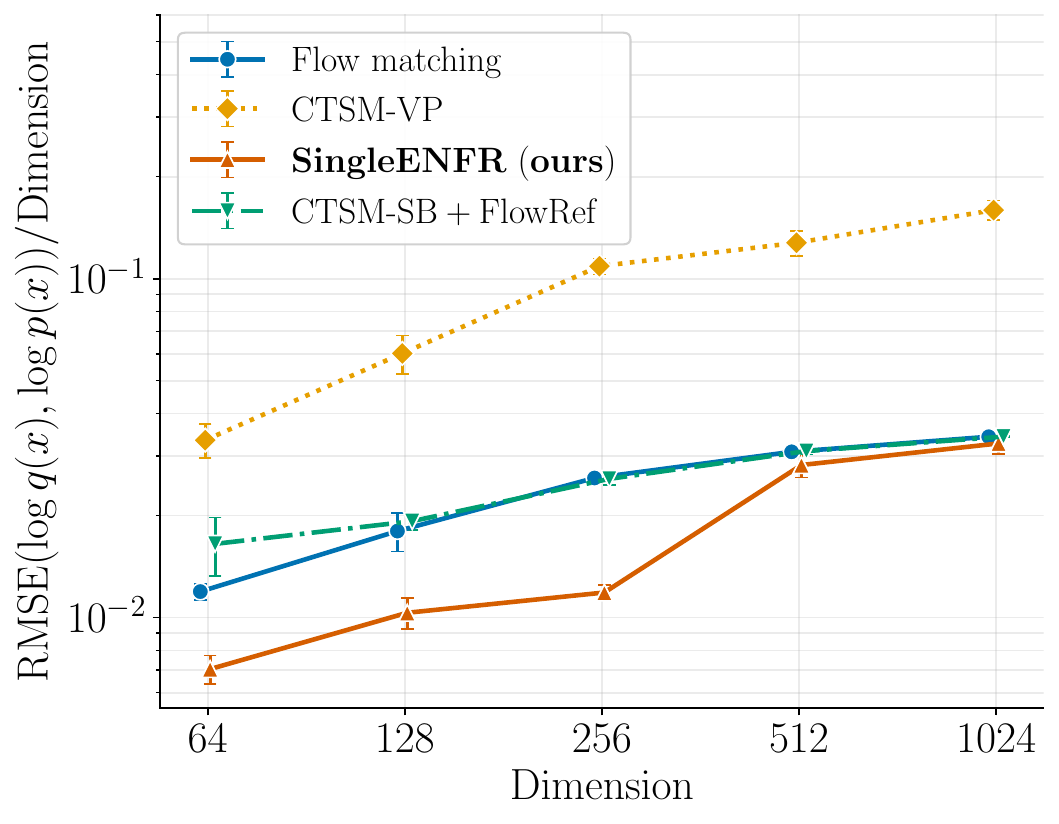}
    \caption{}
\end{subfigure}
\hspace{0.06\textwidth}
\begin{subfigure}[b]{.4\textwidth}
    \centering
    \includegraphics[width=\textwidth]{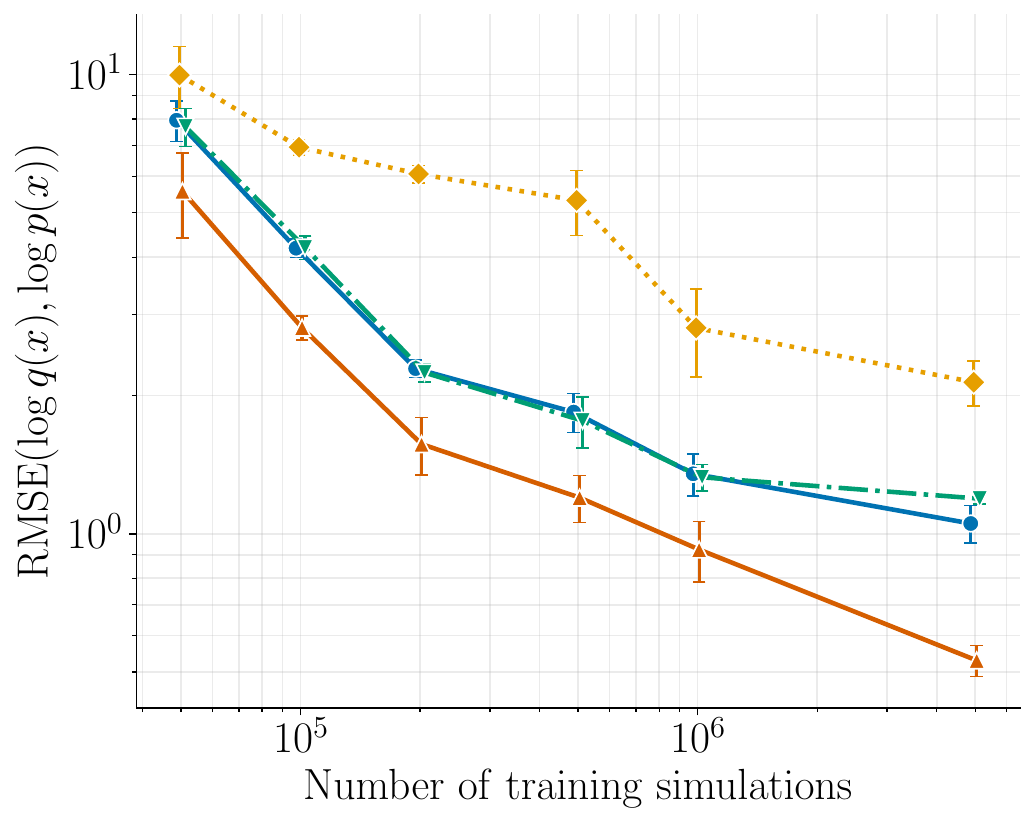}
    \caption{}
\end{subfigure}%
}
    \caption{Scaling of density-estimation accuracy with (a) data dimensionality and (b) number of training simulations on the Gaussian mixture model of Appendix~\ref{app:GMM}, for which the true density is known analytically. We report the RMSE between estimated and true log-density over $200$ held-out test points. In (a) the simulation budget is fixed at $5\times10^6$; in (b) the dimensionality is fixed at 64. The line shows the mean across three trained models and the errorbars show the estimated standard deviation. SingleENFR (orange) improves on the flow it is built on (blue) at every point and is the most accurate of all estimators considered, while CTSM-VP (yellow) is consistently the least accurate.}
    \label{fig:scaling}
\end{figure}

\subsection{Analytical likelihood applications}

In this section we present examples where the true underlying likelihoods are known, such that the evidences can be estimated with established likelihood-based frameworks. We benchmark the models by comparing the simulation-based evidences to the likelihood-based estimates via the root mean square error (RMSE) of the log probabilities. For our likelihood-based estimates we run nested sampling using the dynesty library~\cite{speagle_20}.

\begin{table}[t]
\caption{RMSE of our learned log-evidence estimates against the likelihood-based log-evidence estimate over 100 held out BAO test simulations (lower is better). Our Models SingleENFR and MultiENFR consistently outperform both the flow and CTSM-SB with a flow reference.}
\label{tab:bao_sims}
\begin{center}
\begin{tabular}{lcccc}
\toprule
Simulation & flow & SingleENFR (ours) & MultiENFR (ours) & CTSM-SB + FlowRef  \\
\midrule
$\Lambda$CDM 
& 1.34 & 0.53 & \bf{0.52} & 2.74  \\

$w$CDM 
& 1.40 & 0.76  & \bf{0.67}  & 2.46  \\

$w_0w_a$CDM 
& 1.16 & 0.83 & \bf{0.81} & 3.79  \\

$\nu$CDM 
& 1.41 & 0.47 & \bf{0.46} & 1.77  \\

$\Omega_k$CDM 
& 1.52 & \bf{0.54} & 0.78 & 2.29   \\

\bottomrule
\end{tabular}
\end{center}
\end{table}

\subsubsection{Baryonic acoustic oscillations}

We further test on a scientific example from cosmology. In the early Universe, sound waves propagated through the tightly coupled photon baryon plasma, driven by the competition between gravity and radiation pressure. When photons decoupled from baryons, these waves froze in and imprinted a characteristic length scale, the sound horizon $r_d$, on the correlations between cosmic structures. Baryonic acoustic oscillations (BAO) thus provide a \emph{standard ruler}: measuring its apparent size along and across the line of sight at different redshifts constrains the expansion history $H(z)$, and therefore the nature of the dark energy that drives the late-time accelerated expansion.

In the standard cosmological model, $\Lambda$CDM, dark energy is a cosmological constant with a fixed equation of state $w=-1$. Whether this description is sufficient is currently an open question, and different extensions have been proposed: a constant but free equation of state ($w$CDM), a time-evolving one parameterized as $w(a)=w_0+w_a(1-a)$ ($w_0w_a$CDM), as well as one-parameter extensions that free the neutrino mass sum ($\nu$CDM) or the spatial curvature ($\Omega_k$CDM). Recent BAO measurements from DESI ~\cite{desi_25} have renewed interest in dynamical dark energy, making a principled comparison of these models a timely application. We compare the evidences of these five models for BAO data vectors (more details in Appendix~\ref{app:BAO}).

Table~\ref{tab:bao_sims} shows the resulting evidences and Bayes factors on 100 held out test simulations for the different cosmological models.  
We find that our models SingleENFR and MultiENFR are consistently outperforming both the flow, and CTSM-SB with the flow reference. 
Table~\ref{tab:bao_data} shows the evidence estimates of our best model, MultiENFR, on the state-of-the-art measurement DESI DR2~\citep{desi_25}. The estimates on the data agree on the order of the nested sampling error bars.

\begin{table}
	\centering
	\caption{Evidence estimates using MultiENFR on the DESI DR2 observed data.  For MultiENFR we also quote the estimate of the standard deviation over the ensemble of 5 members. We can reproduce the likelihood-based evidence estimates of the DESI DR2 data with our simulation-based framework within the ensemble scatter.}
    \label{tab:bao_data}
	\begin{tabular}{lcccc} 
        \toprule
		Model &  Likelihood-based analysis & MultiENFR &  $\Delta$ & MultiENFR $\log K$\\
        \midrule
        $\Lambda\mathrm{CDM}$ &  -13.99$\pm$0.16  & -14.07$\pm$0.34 & -0.08 & 0.00 \\
        $w\mathrm{CDM}$ &  -16.41$\pm$0.18& -16.00$\pm$0.35 & 0.40 & -1.94 \\
        $w_0w_a\mathrm{CDM}$ &  -15.63$\pm$0.18 & -15.46$\pm$0.40 & 0.17 & -1.40 \\
        $\nu\mathrm{CDM}$ &  -14.06$\pm$0.16 & -14.01$\pm$0.30 & 0.04 & 0.06 \\
        $\Omega_k\mathrm{CDM}$ &  -15.88$\pm$0.17 & -15.72$\pm$0.23 & 0.16 & -1.65  \\
        \bottomrule
	\end{tabular}
\end{table}

\subsubsection{Time series from an ODE}

In this example we apply our method to a 64 dimensional time series from an ordinary differential equation (ODE). The data consists of noisy measurements of the value $x(t)$ at the 64 evenly spaced time steps. We consider the harmonic ($\beta=0, \gamma =0$), damped($\beta = 0, \gamma \neq 0$), and Duffing ($\beta\neq 0, \gamma \neq 0$) oscillators:

\begin{equation}
    \frac{\mathrm{d}^2 x}{\mathrm{d}t^2} = - 2 \gamma \frac{\mathrm{d}x}{\mathrm{d}t} -\omega^2 x - \beta x^3.
\end{equation}

At each time step we add noise $\epsilon \sim \mathcal{N}(0,(0.1)^2)$. The priors are given in Table~\ref{tab:ode_priors}. 

Figure~\ref{fig:ode_true_pred} shows that our MultiENFR model outperforms all other models tested here. Both MultiENFR and SingleENFR evidence estimates are significantly more precise than those of the flow. We present our full results in Table~\ref{tab:ode_rmse}, where we find again that the combination of CTSM-SB with a flow reference has the highest RMSE, i.e. the worst performance of the models tested.

\subsubsection{Diffusion drift SDE}

We present an application from neuroscience, modeling the decision process of a brain. We integrate an SDE that collects information about a particular state of the world:

\begin{equation}
    \mathrm{d}x = (v + \lambda x) \mathrm{d}t + \mathrm{d}W,
\end{equation}

where we differentiate between the constant drift ($\lambda=0$) and leaky drift ($\lambda < 0$) models. The prior is chosen such that the models are not nested and are shown in Table~\ref{tab:ddm_priors}. We integrate until we hit a collapsing symmetric boundary $B(t) =\pm B e^{-\kappa t} $. We record the reaction time of hitting this boundary as the result of the run, with $\pm$ indexing whether the positive or negative boundary was hit. We also add the non-decision time as an additional parameter to the recorded decision time, which accounts for delays in the decision making process. For each parameter combination, we simulate 400 times and record the reaction times. The data vector is the set of those 400 times. In order to respect exchangeability, we use permutation equivariant / invariant neural networks for the flow and the EN / CTSM. 

We show our results for the flow, SingleENFR and MultiENFR in Figure~\ref{fig:ddm_pred_true}. We find that the flow alone can already produce precise estimates of the evidence for both the constant drift and the leaky drift model. Adding the SingleENFR correction however again improves the resulting estimates in RMSE. Our best model is MultiENFR, which produces the highest precision in evidence estimates.

\begin{figure}[h]
\begin{center}
\begin{subfigure}[b]{.32\textwidth}
    \includegraphics[width=\textwidth]{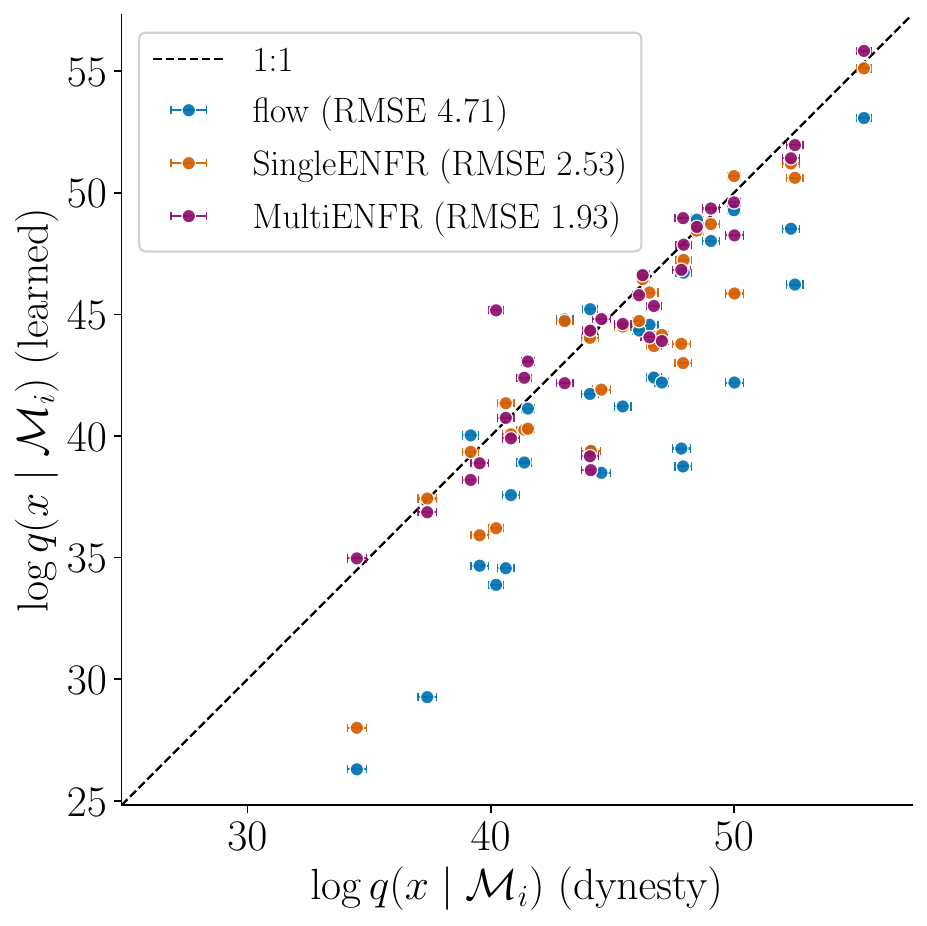}
\caption{}
\label{fig:ode_true_pred}
\end{subfigure}
\hfill
\begin{subfigure}[b]{.67\textwidth}
    \includegraphics[width=\textwidth]{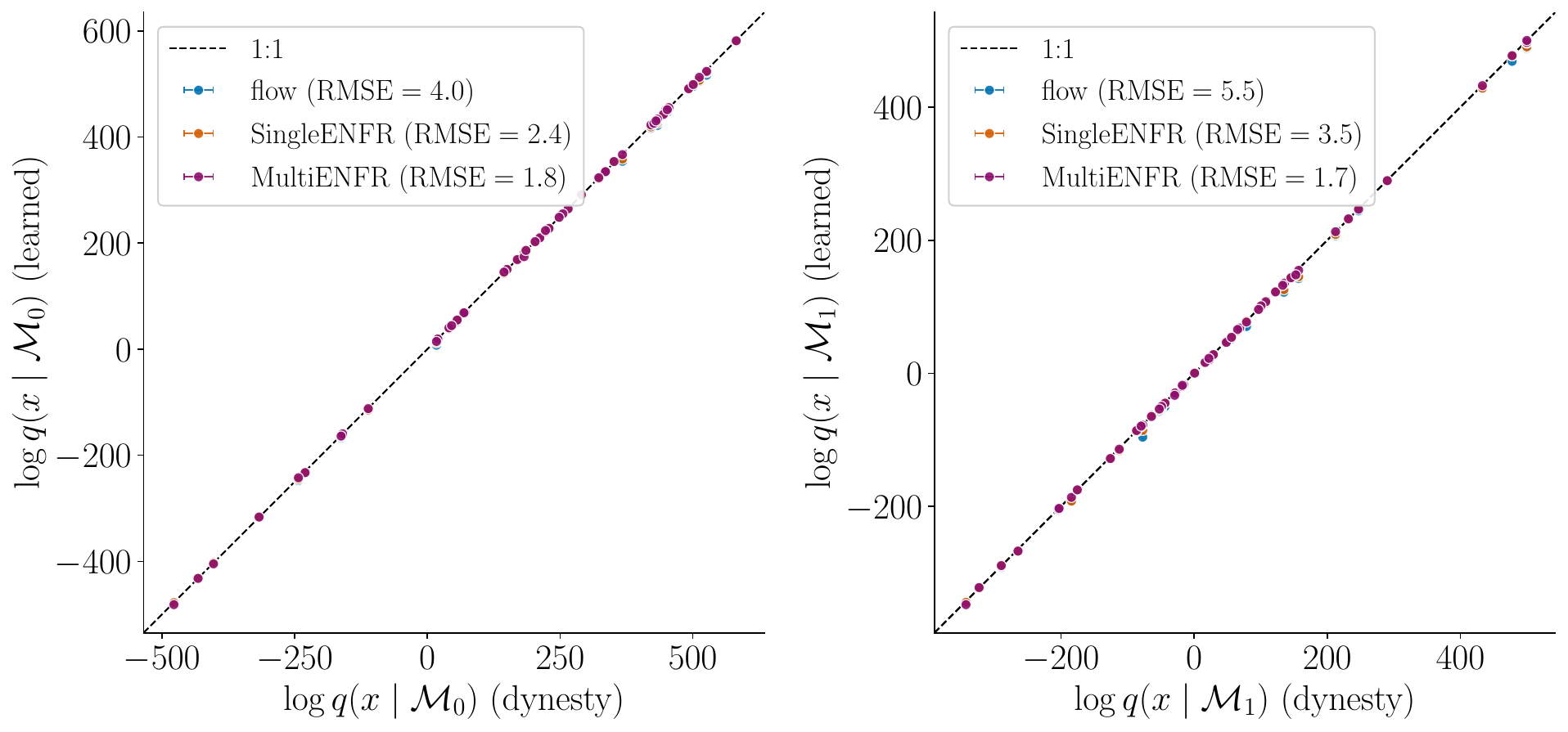}
\caption{}
\label{fig:ddm_pred_true}
\end{subfigure}
\end{center}
\caption{(a) Evidence estimation for noisy observations of an ODE driven time series using flow matching and our ENFR models. MultiENFR outperforms the other models. (b) Diffusion drift application of standard flow matching and our ENFR networks. MultiENFR shows the best performance and good evidence estimation over a large dynamical range.}
\end{figure}

\subsection{Simulation-based applications}

In this section we present examples where the true likelihood is not available, and therefore no likelihood-based estimate of the evidence can be compared to. As we have no way of assessing the evidence estimates in this case, we instead perform blind coverage testing to test the resulting Bayes factors~\citep{zadrozny_02,niculescu_05,jeffrey_24}. Given the evidence estimate, we compute the model posteriors $p_i = p(\mathcal{M}_i|x)$. For a given model $\mathcal{M}_i$, we can then bin the model posterior predictions for a sample of test simulations in probability bins. Among simulations assigned posterior probability $p_i$ to model $\mathcal M_i$, approximately a fraction $p_i$ should actually have been generated from that model. As we know the underlying ground truth model to each simulation we can perform such a test.

\subsubsection{The Hodgkin-Huxley model}

The Hodgkin-Huxley model is a biophysical model for the behavior of biological neurons. The neuron membrane is modeled as an electrical circuit and the observed quantity is the membrane voltage $V(t)$, whose dynamics are given by an SDE:

\begin{equation}
C_m dV = \big(I_{\mathrm{ext}}(t) - I_{\mathrm{Na}} - I_{\mathrm K} - I_{\mathrm L} - z_0 I_{\mathrm M} - z_1 I_{\mathrm{CaL}}\big) dt + \sigma dW,
\end{equation}   

where $C_m$ is a capacitance, and $I$ are currents induced by present elements. Each $I$ except for $I_L$ and $I_\mathrm{ext}$ are described by so-called gating variables which are defined by a system of coupled ODEs driven by the voltage, forming one coupled system. We consider four different models $\mathcal{M}_{z_0z_1}$, the first of which only contains the sodium (Na), potassium (K) and leak (L) contributions ($\mathcal{M}_{00}$). The other three models are extensions adding the slow potassium (M) ($\mathcal{M}_{10}$), and L-type calcium (CaL) ($\mathcal{M}_{01}$) contributions, or both ($\mathcal{M}_{11}$).  We give more details in Appendix~\ref{app:hodgkin}. Figure~\ref{fig:hodgkin} shows the blind calibration test for our MultiENFR model. We find that the resulting Bayes factors are well calibrated.

\begin{figure}[h]
\centering

\makebox[\textwidth][c]{%
\begin{subfigure}[b]{.35\textwidth}
    \centering
    \includegraphics[width=\textwidth]{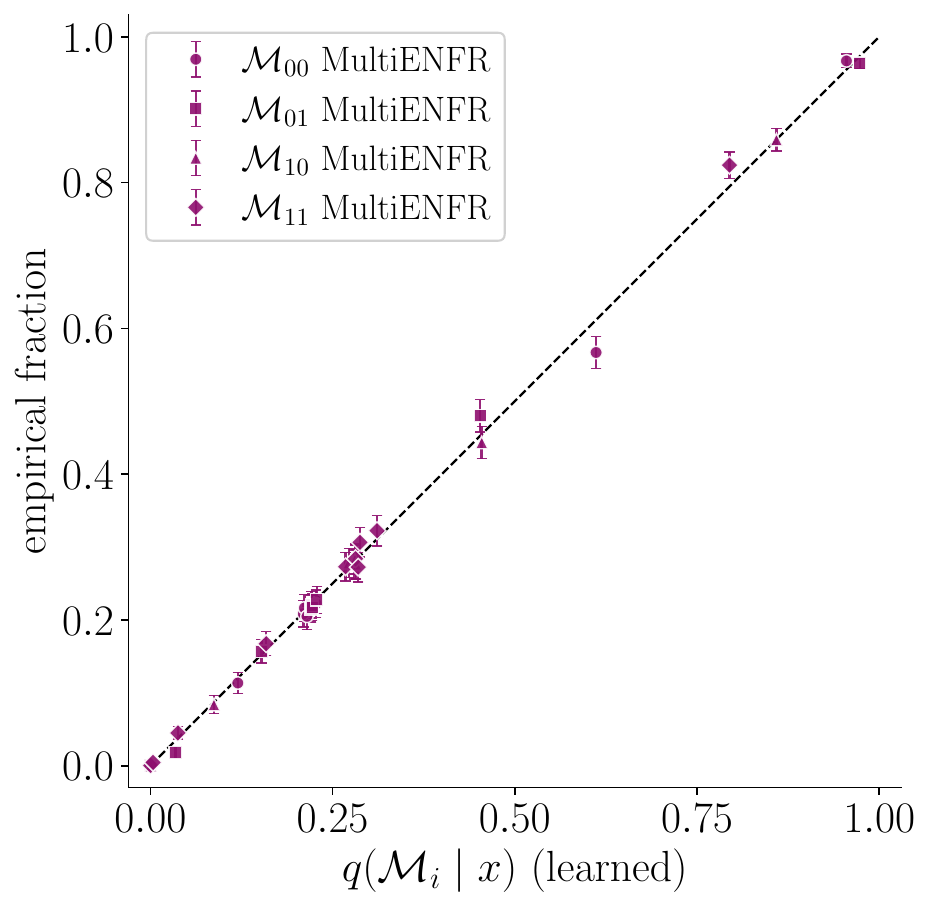}
    \caption{}
    \label{fig:hodgkin}
\end{subfigure}
\hspace{0.08\textwidth}
\begin{subfigure}[b]{.35\textwidth}
    \centering
    \includegraphics[width=\textwidth]{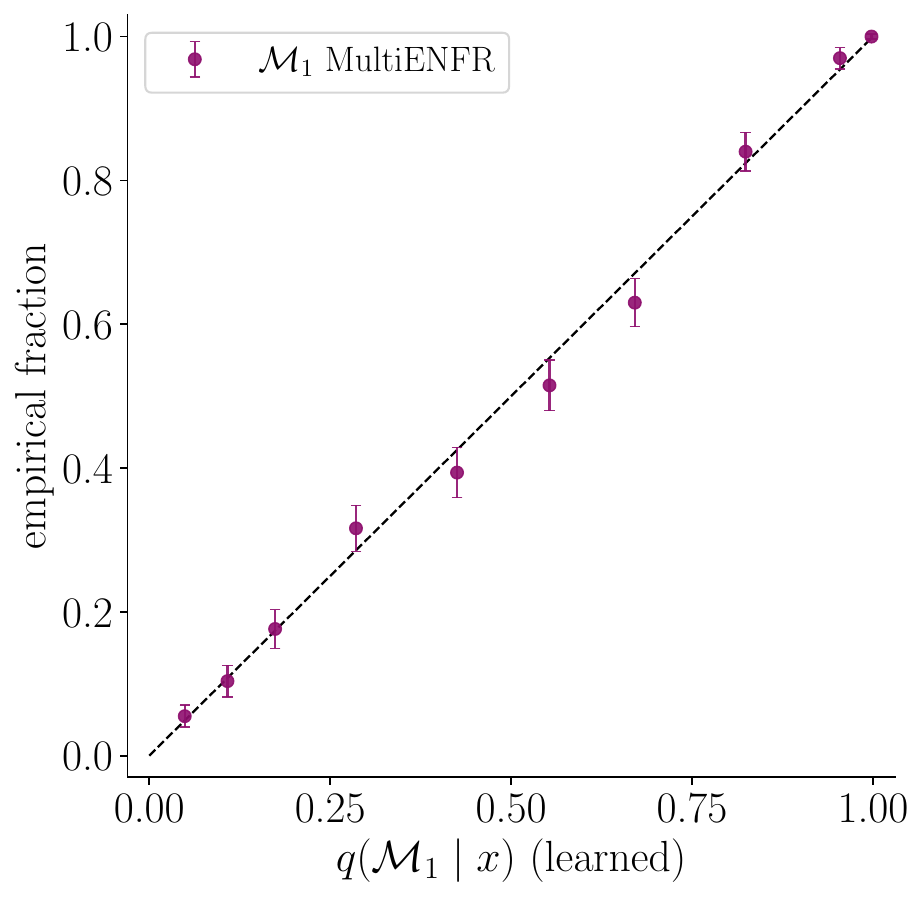}
    \caption{}
    \label{fig:bias}
\end{subfigure}%
}

\caption{Blind coverage testing for the simulation-based applications for (a) the Hodgkin-Huxley application, and (b) the Galaxy assembly bias application. As we do not have access to a likelihood-based evidence estimate for our simulation based models we test whether the resulting Bayes factors are well calibrated. 
We find that our MultiENFR architecture is well calibrated in both cases. The error bars show $2\sigma$ binomial errors.}
\end{figure}

\subsubsection{Galaxy assembly bias}

In cosmology, we observe the 3D distribution of galaxies in the Universe as a biased tracer of the underlying dark matter density field. However, theoretical predictions can only accurately model the dark matter distribution. It is therefore of major interest to model the connection between dark matter and galaxies, the so-called bias, and understand what are its major drivers. 
We compare two bias models using $362$-dimensional summary statistics from simulated galaxy catalogs: a mass-only model and an extension with assembly and phase-space bias.
Details and parameter priors are given in Appendix~\ref{app:bias}.
Figure~\ref{fig:bias} shows that the MultiENFR model is well calibrated and could be used to determine the underlying bias model from observations in a likelihood-free way.

\section{Fast evidences in action: hypothesis testing}\label{sec:hypothesis}

\begin{wrapfigure}{r}{0.5\textwidth}
    \vspace{-1.2\baselineskip}   
    \centering\includegraphics[width=\linewidth]{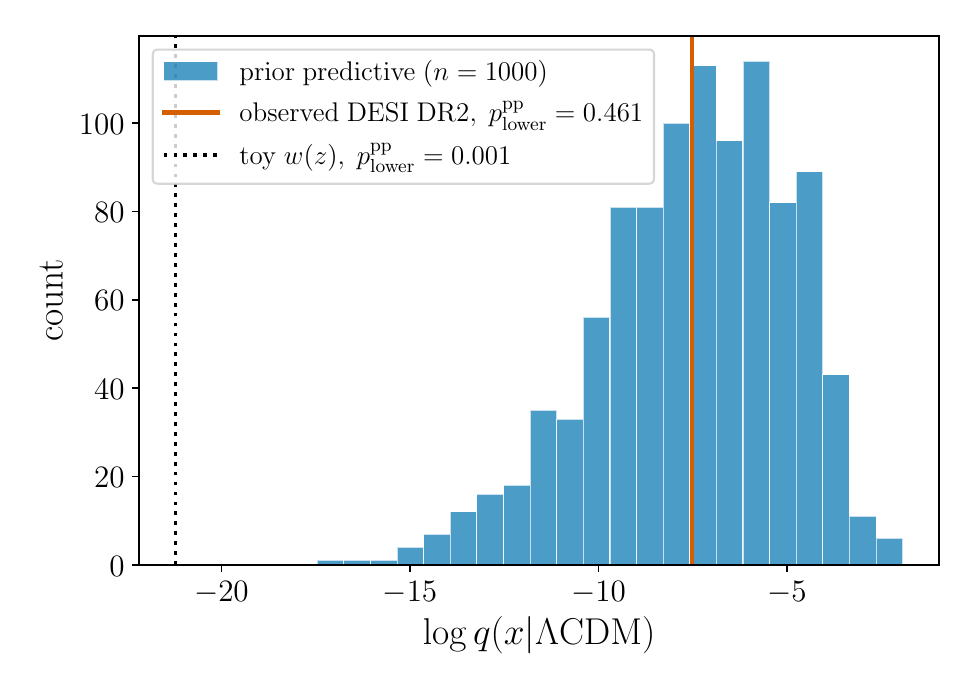}
    \caption{Evidence estimates of the MultiENFR model on prior predictive simulations and the DESI DR2 data under the $\Lambda$CDM model. While 46.1\% of simulations have lower evidence than the data, no simulation has lower evidence than the toy $w(z)$ model, and we can clearly pick up on the model misspecification. }
    \label{fig:p_value}
    \vspace{-\baselineskip}      
\end{wrapfigure}

As all of the models presented here can estimate evidences extremely fast once they are trained, we can use the evidence in traditionally intractable ways. For example, we can turn the evidence into a test statistic and run a hypothesis test~\citep{box_80}. We conduct such an analysis by drawing 1000 samples from the prior predictive of our baryonic acoustic oscillations application, and use our trained MultiENFR model to estimate the evidence of each simulation under the $\Lambda$CDM model. We consider this our null hypothesis. Comparing this to the evidence estimate of the data, allows us to quantify a prior predictive p-value: 
\begin{equation}
    p^{pp}_\mathrm{lower} = P[q_{\phi,\psi}(x_{pp}|\mathcal{M})\leq q_{\phi,\psi}(x_\mathrm{obs}|\mathcal{M})],
\end{equation}
i.e. we can estimate how many simulations have lower evidence than the data under a given model. We use the finite sample estimator $\hat{p}^{pp}_\mathrm{lower}= \frac{R+1}{n+1}$, where $R$ is the number of simulations with lower or equal evidence.

Figure~\ref{fig:p_value} shows the evidence estimates of 1000 test simulations from the prior predictive as well as the evidence estimate of the DESI DR2 data for the $\Lambda$CDM model. We infer $p^{pp}_\mathrm{lower}=0.461$ finding no indication of model misspecification for our dataset. We also show a simulation with a very non-standard dark energy parameterization (see Appendix~\ref{app:toy_w}) and could reject the null hypothesis at $p^{pp}_\mathrm{lower}=0.001$.

\section{Limitations and Conclusions}

Two main limitations remain. First, in our GMM scaling test the flow reference degrades beyond $\sim 500$ dimensions and the evidence network recovers only a small correction. High-dimensional density estimation remains a bottleneck. Second, our benchmarks use $\mathcal{O}(10^5)$ simulations per model, which many scientific simulators cannot afford. Encouragingly, the evidence-network correction is most effective precisely in the small data regime (Figure~\ref{fig:scaling}b), so the method is well placed for the simulation-limited settings where it is needed most.

Within these limits, ENFR provides a practical framework for simulation-based estimation of the Bayesian evidence by combining a flow-matched reference with an evidence network. The single-model version (SingleENFR) improves on the flow it is built on in every benchmark, and the multi-model version (MultiENFR), based on a new pairwise l-POP loss for which we prove asymptotic recovery of all Bayes factors, is the most accurate estimator across our scientific applications. Because inference is amortized, the evidence becomes cheap enough to use as a test statistic, which we demonstrated with a prior predictive misspecification test on DESI DR2 data. Our work pushes the simulation-based paradigm beyond parameter estimation, unlocking more of the standard Bayesian toolkit for the simulation-based case: principled model misspecification testing, goodness-of-fit assessment, and model comparison through the Bayesian evidence.

\subsection*{AI use statement}

Generative AI tools were used throughout the preparation of this manuscript. We used Claude Opus 5 and GPT-5.4 and GPT-5.5 for setting up and running numerical experiments and analyses, and discussing the methodology of this project. This includes generation of code which produced the training datasets used in this work. Generated code for our core methods has been checked by the authors. We have also used generative AI to assist in formatting the mathematical proofs presented in this work.  GPT-5.5 and GPT-5.6 Sol were used for reviewing this manuscript and stylistic editing. We have not used generative AI tools to draft entire sections of this manuscript. We have not used generative AI tools for the direct generation of any plot. The responsibility stays with the authors.

\subsection*{Ethics statement}

Our work is focused on the evaluation of simulation-based models for scientific reasons.

\subsection*{Reproducibility statement}

We provide the details necessary for reproducing the results presented here. We present in great detail the setups of our experiments along with preprocessing steps in Appendix~\ref{app:experiments}. The details about the architectures used are in Appendix~\ref{app:architectures}.

\subsubsection*{Acknowledgments}

We would like to thank Adrian Bayer, Konstantin Leyde, Henrique Rubira, and Nils Sch\"oneberg for stimulating and insightful discussions. KL acknowledges support via the KISS consortium (05D23WM1) funded by the German Federal Ministry of Education and Research BMBF in the ErUM-Data action plan. We are also grateful for support from the Cambridge-LMU strategic partnership.
CCL, FVN, and SK would like to thank the Aspen Center for Physics, which is supported by National Science Foundation grant PHY-2210452, where part of this work was discussed.
S.K. has been partially
supported by STFC consolidated grants ST/T000694/1 and ST/X000664/1.
The authors acknowledge support from the Simons Foundation as part of the Simons Collaboration
on Learning the Universe.
Part of this work was done at the Flatiron Institute as part of the CCA predoctoral program. The Center for Computational Astrophysics is supported by the Simons Foundation.
This research used data obtained with the Dark Energy Spectroscopic Instrument (DESI).
DESI construction and operations is managed by the Lawrence Berkeley National Laboratory.
This material is based upon work supported by the U.S. Department of Energy, Office of Science,
Office of High-Energy Physics, under Contract No. DE--AC02--05CH11231, and by the National
Energy Research Scientific Computing Center, a DOE Office of Science User Facility under the
same contract.
Additional support for DESI was provided by the U.S. National Science Foundation (NSF),
Division of Astronomical Sciences under Contract No. AST-0950945 to the NSF's National
Optical-Infrared Astronomy Research Laboratory; the Science and Technology Facilities Council
of the United Kingdom; the Gordon and Betty Moore Foundation; the Heising-Simons Foundation;
the French Alternative Energies and Atomic Energy Commission (CEA); the National Council of
Humanities, Science and Technology of Mexico (CONAHCYT); the Ministry of Science and Innovation
of Spain (MICINN), and by the DESI Member Institutions.
The DESI collaboration is honored to be permitted to conduct scientific research on I'oligam
Du'ag (Kitt Peak), a mountain with particular significance to the Tohono O'odham Nation.
Any opinions, findings, and conclusions or recommendations expressed in this material are those
of the author(s) and do not necessarily reflect the views of the U.S. National Science Foundation,
the U.S. Department of Energy, or any of the listed funding agencies.

\bibliography{bibliography}
\bibliographystyle{iclr2027_conference}

\appendix

\section{Details about the experiments} \label{app:experiments}

Here we provide more details about the numerical experiments we ran.

\subsection{GMM} \label{app:GMM}

Our GMM is defined as follows:

\begin{equation}
p(x)=\sum_{k=1}^{3} w_k\,\mathcal{N}(x\mid \mu_k,\Sigma_k)
\qquad
w=(0.45,\,0.35,\,0.20).
\end{equation}

The components are placed on a sphere of radius \(r=3.8\), i.e. \(\mu_k=r\,u_k\) with \(u_k\) sampled uniformly from the unit sphere. Each covariance is constructed as 

\begin{equation}
    \Sigma_k = D_k + \beta\,L_kL_k^{\mathsf T} + \epsilon I,
\end{equation}  

where \(D_k\) is a diagonal matrix with entries \(d_{k,i}\sim\mathcal{U}[0.7,1.8]\), \(L_k\sim\mathcal{N}(0,1/\sqrt{3})\in\mathbb{R}^{d\times 3}\) is a random factor, \(\beta=0.55\) controls correlation strength and is fixed in our analysis, and \(\epsilon=0.15\) is a jitter term ensuring positive definiteness and stable inverses. 

For the dimensionality scaling we draw 5M samples. 4.5M are used for training and 500k for the evaluation set for the flow. EN and CTSM use 4M for training and 1M for validation for the dimensionality scaling. We test on 200 more fresh draws from the GMM, which are not used in training or evaluation. We preprocess the data by standardizing it with parameters estimated from the training set:

\begin{equation}\label{equ:standardize}
    x = \frac{x_\mathrm{raw} - \mu_\mathrm{train}}{\sigma_\mathrm{train}},
\end{equation}

where $\mu_\mathrm{train}$ is the estimated mean of the training set and $\sigma_\mathrm{train}$ is the estimated standard deviation of the training set. As we end up estimating densities, we have to take into account the Jacobian $J$ of this transformation:

\begin{equation}
    \log p(x_\mathrm{raw}) = \log p(x) + \log \left| \det J \right|.
\end{equation}

The entire runtime for the dimensionality scaling of all models for one of the seeds presented here was $\sim 55$ hours in wall clock, where we trained on a single NVIDIA A40 48GB VRAM.

\subsection{BAO and priors} \label{app:BAO}

Baryonic acoustic oscillations are a characteristic scale in the Universe that is set by traveling sound waves in the early Universe. At some point, due to temperature changes, these sound waves can no longer propagate and are frozen in. This imprints the sound horizon scale which is the distance of how far a sound wave could have traveled until it is frozen. Today, we can still observe this scale by measuring the statistics of the galaxies that have formed.

The observed BAO scale depends on the expansion history of the Universe. Along the line of sight, it is related to the Hubble distance,

\begin{equation}
D_H(z) = \frac{c}{H(z)},
\end{equation}

while perpendicular to the line of sight it is related to the transverse comoving distance,

\begin{equation}
D_M(z)=
\begin{cases}
\dfrac{c}{H_0\sqrt{\Omega_k}}
\sinh\!\left[
\sqrt{\Omega_k}\,\dfrac{H_0 D_C(z)}{c}
\right],
& \Omega_k>0, \\[1.2ex]
D_C(z),
& \Omega_k=0, \\[1.2ex]
\dfrac{c}{H_0\sqrt{|\Omega_k|}}
\sin\!\left[
\sqrt{|\Omega_k|}\,\dfrac{H_0 D_C(z)}{c}
\right],
& \Omega_k<0,
\end{cases}
\end{equation}

where
\begin{equation}
D_C(z)=c\int_0^z \frac{\mathrm{d}z'}{H(z')}.
\end{equation}

Here, \(c\) is the speed of light, \(z\) is the redshift, and \(H(z)\) is the Hubble expansion rate at redshift \(z\). When the BAO signal is measured isotropically, these two distances are combined into the volume-averaged distance,

\begin{equation}
D_V(z) =
\left[
z\,D_H(z)\,D_M^2(z)
\right]^{1/3}.
\end{equation}

BAO measurements therefore constrain the dimensionless combinations \(D_H(z)/r_d\), \(D_M(z)/r_d\), or \(D_V(z)/r_d\), depending on the analysis, where \(r_d\) is the sound horizon at the baryon drag epoch. The data vector consists of 13 of these quantities measured at 7 different redshifts.

Table~\ref{tab:bao_priors} shows the boundaries of our uniform priors on the parameters of the cosmological models. We follow the choices by~\cite{ong_26}. We run simulations using the Gaussian likelihood and the DESI DR2 covariance~\cite{desi_25} using the \texttt{cobaya}~\citep{torrado_21} code. We run 500k simulations of each model for training and 2k for evaluation. For each model we run nested sampling for an additional 100 simulations per model.  Due to the presence of outliers in the data, before applying Equation~\ref{equ:standardize}, we apply the transform:

\begin{equation}\label{equ:asinh}
    x = \mathrm{asinh}\left(\frac{x_\mathrm{raw}-\mathrm{median}(x_\mathrm{raw})}{\mathrm{IQR}}\right),
\end{equation}

where $\mathrm{IQR}$ is the range between the 25th and 75th percentile. We subsequently standardize according to Equation~\ref{equ:standardize}. For density estimation, the Jacobian of this transformation is taken into account in our experiments.

We ran the BAO experiments on a single NVIDIA A40 with 48GB VRAM.

\begin{table}
\centering
\caption{Boundaries of the uniform priors for the parameters of the five cosmological models used in our BAO example. The six base parameters of the $\Lambda$CDM model are shared by every model. Each extension introduces either one or two more parameters. }
\label{tab:bao_priors}
\begin{tabular}{lccccc}
\toprule
Parameter & $\Lambda$CDM & $w$CDM & $w_0w_a$CDM & $\nu$CDM & $\Omega_k$CDM \\
\midrule
\multicolumn{6}{l}{Base parameters (shared)} \\
$H_0$ & \multicolumn{5}{c}{$[20,\,100]$} \\
$\Omega_b h^2$ & \multicolumn{5}{c}{$[0.005,\,0.1]$} \\
$\Omega_c h^2$ & \multicolumn{5}{c}{$[0.001,\,0.99]$} \\
$\tau$ & \multicolumn{5}{c}{$[0.01,\,0.8]$} \\
$\log(10^{10}A_s)$ & \multicolumn{5}{c}{$[1.61,\,3.91]$} \\
$n_s$ & \multicolumn{5}{c}{$[0.8,\,1.2]$} \\
\midrule
\midrule
\multicolumn{6}{l}{Extension parameters} \\
$w$ (or $w_0$) & \textemdash & $[-3,\,-0.333]$ & $[-3,\,1]$ & \textemdash & \textemdash \\
$w_a$ & \textemdash & \textemdash & $[-3,\,2]$ & \textemdash & \textemdash \\
$\sum m_\nu$ [eV] & \textemdash & \textemdash & \textemdash & $[0.06,\,2]$ & \textemdash \\
$\Omega_k$ & \textemdash & \textemdash & \textemdash & \textemdash & $[-0.3,\,0.3]$ \\
\midrule
Number of Parameters & 6 & 7 & 8 & 7 & 7 \\
\bottomrule
\end{tabular}
\end{table}

\subsection{Toy dark energy model} \label{app:toy_w}

As a toy model for late-time model misspecification, we consider a smooth transition in the dark-energy equation of state,
\begin{equation}
    w(z)
    =
    -1
    +
    \frac{\Delta w}{2}
    \left[
        1
        -
        \tanh\left(
            \frac{z-z_t}{\Delta z}
        \right)
    \right].
\end{equation}
We choose
\begin{equation}
    \Delta w = 0.4,
    \qquad
    z_t = 0.6,
    \qquad
    \Delta z = 0.15.
\end{equation}
The model therefore approaches $w=-1$ at high redshift, while at low redshift it transitions to $w\simeq -0.6$. 

\subsection{ODE application details and priors} \label{app:ode}

Table~\ref{tab:ode_priors} shows the priors on the parameters. We generate a training set of 1M simulations per model, 10\% of which is used as validation data. We apply the preprocessing transformation from Equation~\ref{equ:standardize}. We quote our results in Table~\ref{tab:ode_rmse}.

\begin{table}[t]
\centering
\caption{Prior distributions for the ODE oscillator models. All priors are independent uniforms;
parameters shared between models have identical priors.}
\label{tab:ode_priors}
\begin{tabular}{lcccc}
\toprule
Parameter & Prior & Harmonic & Damped & Duffing \\
\midrule
$x_0$      & $\mathcal{U}(-2,\,2)$     & \checkmark & \checkmark & \checkmark \\
$v_0$      & $\mathcal{U}(-2,\,2)$     & \checkmark & \checkmark & \checkmark \\
$\omega$   & $\mathcal{U}(0.4,\,2.5)$  & \checkmark & \checkmark & \checkmark \\
$\gamma$   & $\mathcal{U}(0,\,0.4)$    &            & \checkmark & \checkmark \\
$\beta$    & $\mathcal{U}(-0.8,\,0.8)$ &            &            & \checkmark \\
\bottomrule
\end{tabular}
\end{table}

\begin{table}
  \centering
  \caption{RMSE of the learned evidence against nested sampling (dynesty) on the ODE
    oscillator benchmark, over 10 held-out simulations per model.}
  \label{tab:ode_rmse}
  \begin{tabular}{lrrrr}
    \hline
    Method & harmonic & damped & Duffing & All \\
    \hline
    flow & 6.13 & 3.69 & 3.91 & 4.71 \\
    SingleENFR & 2.54 & 2.44 & 2.60 & 2.53 \\
    MultiENFR & 2.36 & 0.94 & 2.17 & 1.93 \\
    flow + CTSM-SB (ens.) & 135.47 & 26.23 & 7.96 & 79.80 \\
    \hline
  \end{tabular}
\end{table}

\subsection{DDM} \label{app:ddm}

We show the priors of our DDM experiment in Table~\ref{tab:ddm_priors},
adopting the parameter prior ranges of
\citet{schroder_24}. The likelihood is available by solving the corresponding Fokker-Planck equation. We use the pyDDM implementation for this and refer to its documentation for further details~\citep{pyddm_20}. We generate 500k simulations for each model for training out of which we split 10\% for validation. We apply the preprocessing transformation from Equation~\ref{equ:asinh}. We show our results in Table~\ref{tab:ddm_density}.

\begin{table}[t]
\centering
\caption{Prior distributions for the drift-diffusion models.}
\label{tab:ddm_priors}
\begin{tabular}{llccc}
\toprule
Parameter & Description & Prior & $\mathcal{M}_0$ & $\mathcal{M}_1$ \\
\midrule
$v$            & drift rate                 & $\mathcal{U}(0,\,5)$       & \checkmark & \checkmark \\
$\lambda$      & leak rate                  & $\mathcal{U}(-20,\,-5)$    &            & \checkmark \\
$B$            & initial boundary height    & $\mathcal{U}(0.3,\,2)$     & \checkmark & \checkmark \\
$\kappa$       & boundary collapse rate     & $\mathcal{U}(0.5,\,1.5)$   & \checkmark & \checkmark \\
$t_{\mathrm{nd}}$ & non-decision time [s]   & $\mathcal{U}(0.1,\,0.3)$   & \checkmark & \checkmark \\
\bottomrule
\end{tabular}
\end{table}

\begin{table}
  \centering
  \caption{Estimated evidence against nested sampling (dynesty) on the DDM benchmark, over 50 held-out datasets from each model.}
  \label{tab:ddm_density}
  \begin{tabular}{lrr}
    \toprule
    Model & RMSE under $\mathcal{M}_0$ & RMSE under $\mathcal{M}_1$ \\
    \midrule
    flow & 4.0 &  5.5 \\
    SingleENFR & 2.4 & 3.5  \\
    MultiENFR & 1.8 & 1.7 \\
    flow + CTSM-SB & 20.3  & 115.4  \\
    \bottomrule
  \end{tabular}
\end{table}

\subsection{Hodgkin-Huxley} \label{app:hodgkin}

The currents are defined as follows:

\begin{equation}
\begin{aligned}
I_{\mathrm{Na}} &= g_{\mathrm{Na}} m^3 h(V - E_{\mathrm{Na}})\\
I_{\mathrm K} &= g_{\mathrm K} n^4 (V - E_{\mathrm K})\\
I_{\mathrm L} &= g_L (V - E_L)\\
I_{\mathrm M} &= g_M p(V - E_{\mathrm K})\\ 
I_{\mathrm{CaL}} &= g_{\mathrm{Ca}} q^2 r(V - E_{\mathrm{Ca}}),\\
\end{aligned}
\end{equation}

where $g$ are the maximal conductances, $E$ are reversal potentials and $m,h,n,p,q,r \in [0,1]$ are gating variables. The gating variables $x_{\mathrm{gate},a} \in \{m,h,n\}$ are obtained by a first order differential equation: $\dot x_{\mathrm{gate},a} = \alpha_x(V-V_t)(1-x_{\mathrm{gate},a}) - \beta_x(V-V_t)\,x_{\mathrm{gate},a}$, where $\alpha$ and $\beta$ are voltage-dependent rate functions as defined in~\cite{posposchil_08}. The gating variables $x_{\mathrm{gate},b} \in \{q,r\}$ evolve according to $\dot x_{\mathrm{gate}_b} = \alpha_x(V)(1-x_{\mathrm{gate}_b}) - \beta_x(V)\,x_{\mathrm{gate},b}$. p is obtained through $\dot p = (p_\infty(V) - p)/\tau_p(V)$ where $p_\infty$ and $\tau_p$ are defined by:

\begin{equation}
\begin{aligned}
p_\infty &= 1/(1+e^{-(V+35)/10}) \\
\tau_p &= \tau_M / (3.3e^{(V+35)/20} + e^{-(V+35)/20}). \\
\end{aligned} 
\end{equation}

All gates are initialized at $V(0) = -70$\,mV. The
injected current is a step of $2\,\mu\mathrm{A\,cm^{-2}}$ between 225 and
1225\,ms with added noise, $I_{\mathrm{inj}}(t_k) = I_{\mathrm{step}}(t_k)
+ \sigma\,\xi_k/\sqrt{\Delta t}$ where $\xi_k \sim \mathcal N(0,1)$. We integrate the voltage using a Crank--Nicolson integrator at $\Delta t = 0.025$ ms.

From each simulation we extract 13 hand-picked summaries: resting
potential, spike latency, inverse first/mean/median inter-spike interval,
two ISI-adaptation indices, two AP-amplitude adaptation indices, AP amplitude,
inverse AP width, and the coefficient of variation and Fano factor of the AP
amplitudes. Some summaries are undefined in some of the simulations, e.g. due to the absence of spikes.
In that case they are set to
their physical limit (latency 1000\,ms, inverse ISI and
inverse width 0, adaptation indices 1, amplitude, CV and Fano factor 0).

We use the \texttt{jaxley}~\cite{jaxley_25} library for implementation. We generate 400k simulations of each model for training and use 10\% of that for validation. 
We adopt the parameter prior ranges of~\citet{schroder_24}, which were originally
adapted from~\citet{goncalves_20}. We present those prior choices in Table~\ref{tab:hh_priors} and list fixed parameters in Table~\ref{tab:hh_fixed}. We preprocess using Equation~\ref{equ:standardize}.

\begin{table}[t]
\centering
\caption{Prior distributions for the Hodgkin--Huxley models.}
\label{tab:hh_priors}
\begin{tabular}{lll}
\toprule
Parameter  & Prior & Unit  \\
\midrule
$g_L$                         & $\mathcal{U}(10^{-6},\,3\times10^{-4})$            & $\mathrm{S\,cm^{-2}}$ \\
$E_L$                  & $\mathcal{U}(-80,\,-60)$                          & $\mathrm{mV}$         \\
$g_{\mathrm{K}}$   & $\mathcal{U}(1.5\times10^{-3},\,1.5\times10^{-2})$ & $\mathrm{S\,cm^{-2}}$ \\
$V_t$              & $\mathcal{U}(-70,\,-50)$                  & $\mathrm{mV}$         \\
$g_{\mathrm{Na}}$           & $\mathcal{U}(8\times10^{-3},\,8\times10^{-2})$    & $\mathrm{S\,cm^{-2}}$ \\
$\sigma$           & $\mathcal{U}(10^{-4},\,0.15)$                   & $\mu\mathrm{A\,cm^{-2}\,ms^{1/2}}$ \\
\midrule
$g_{\mathrm{M}}$         & $\mathcal{U}(10^{-5},\,6\times10^{-4})$           & $\mathrm{S\,cm^{-2}}$            \\
$\tau_{\mathrm{M}}$  & $\mathcal{U}(200,\,2000)$                     & $\mathrm{ms}$                    \\
\midrule
$g_{\mathrm{Ca}}$  & $\mathcal{U}(5\times10^{-5},\,10^{-3})$          & $\mathrm{S\,cm^{-2}}$            \\
\bottomrule
\end{tabular}
\end{table}

\begin{table}[t]
\centering
\caption{Fixed parameters for the Hodgkin--Huxley simulations.}
\label{tab:hh_fixed}
\begin{tabular}{lll}
\toprule
Quantity & Value & Unit \\
\midrule
$E_{\mathrm{Na}}$  & $50$    & $\mathrm{mV}$ \\
$E_{\mathrm{K}}$    & $-90$   & $\mathrm{mV}$ \\
$E_{\mathrm{Ca}}$ & $120$ & $\mathrm{mV}$ \\
$C_m$                   & $1$     & $\mu\mathrm{F\,cm^{-2}}$ \\
Compartment length / radius                  & $10$ / $1$ & $\mu\mathrm{m}$ \\
Step current density (225--1225\,ms)         & $2$     & $\mu\mathrm{A\,cm^{-2}}$ \\
Trace duration / integration step            & $1450$ / $0.025$ & $\mathrm{ms}$ \\
\bottomrule
\end{tabular}
\end{table}

\subsection{Galaxy assembly bias} \label{app:bias}

We populate halo catalogs from 2000 high-resolution Quijote
Latin-hypercube simulations~\citep{gb_quijote2020} at $z=0.5$ in periodic cubes of side length
$1000\,h^{-1}\mathrm{Mpc}$. Model~$\mathcal{M}_0$ uses a mass-only
halo occupation distribution (HOD)~\citep{gb_zheng2007}, implemented with
Halotools~\citep{gb_halotools2017}, and baseline mean occupations
\begin{equation}
\begin{aligned}
\mu_{\rm cen}(M)
  &= \frac{1}{2}\left[1+\operatorname{erf}\left(
       \frac{\log_{10}M-\log_{10}M_{\min}}{\sigma_{\log M}}
       \right)\right],\\
\mu_{\rm sat}(M)
  &= \mu_{\rm cen}(M)\left(\frac{M-M_0}{M_1}\right)^\alpha,
  \qquad M>M_0,
\end{aligned}
\end{equation}

with $\mu_{\rm sat}=0$ otherwise. Central and satellite counts are drawn
separately from Bernoulli and Poisson distributions. Centrals follow the
halo centers and bulk velocities while satellites have Navarro--Frenk--White
radial profiles and Jeans-equilibrium velocities.
Model~$\mathcal{M}_1$ adds concentration-dependent occupation at fixed
mass using a decorated HOD~\citep{gb_decorated2016}, with strengths
$A_{\rm cen}$ and $A_{\rm sat}$. It also includes velocity bias
\citep[e.g.][]{gb_velocity2015} and a rescaled satellite concentration:
central velocity dispersion $\alpha_{\rm c}V_{\rm vir}$, satellite
velocity multiplier $\alpha_{\rm s}$ relative to the halo, and satellite
concentration $c_{\rm gal}=F_{\rm sat}c_{\rm halo}$. For more details on the 
HOD models see~\cite{gb_zheng2007,gb_decorated2016}.

Table~\ref{tab:galaxy_bias_priors} lists the priors in our analysis. Shared parameters
have identical priors and the models have equal prior probability.
For $\log_{10}M_{\min}$, we uniformly sample the $z=0.5$ entries of
$10^6$ stored draws from an empirical prior separately constrained using
the redshift distribution of the BOSS DR12 CMASS North Galactic Cap
sample~\citep{gb_boss2017}. Other HOD parameters
are drawn independently as tabulated. No HOD redshift evolution is used.

\begin{table}[t]
\centering
\small
\caption{Priors for the Quijote galaxy-bias comparison.
$\mathcal{U}(a,b)$ denotes a uniform distribution and
$\mathcal{TN}(0,0.2^2;[-1,1])$ a zero-mean Gaussian of standard deviation
$0.2$, truncated to $[-1,1]$. Mass logarithms are base ten with masses in
$h^{-1}M_\odot$. Numerical constants denote fixed values.
The HOD mass $M_1$ is distinct from the model label $\mathcal{M}_1$.}
\label{tab:galaxy_bias_priors}
\begin{tabular}{@{}lll@{}}
\toprule
Parameter & $\mathcal{M}_0$ & $\mathcal{M}_1$ \\
\midrule
$\Omega_{\rm m}$ & $\mathcal{U}(0.1,0.5)$ & $\mathcal{U}(0.1,0.5)$ \\
$\Omega_{\rm b}$ & $\mathcal{U}(0.03,0.07)$ & $\mathcal{U}(0.03,0.07)$ \\
$h$ & $\mathcal{U}(0.5,0.9)$ & $\mathcal{U}(0.5,0.9)$ \\
$n_{\rm s}$ & $\mathcal{U}(0.8,1.2)$ & $\mathcal{U}(0.8,1.2)$ \\
$\sigma_8$ & $\mathcal{U}(0.6,1.0)$ & $\mathcal{U}(0.6,1.0)$ \\
\midrule
$\log_{10}M_{\min}$ & NGC empirical prior & NGC empirical prior \\
$\sigma_{\log M}$ & $\mathcal{U}(0.1,0.6)$ & $\mathcal{U}(0.1,0.6)$ \\
$\log_{10}M_0$ & $\mathcal{U}(13,15)$ & $\mathcal{U}(13,15)$ \\
$\log_{10}M_1$ & $\mathcal{U}(13,15)$ & $\mathcal{U}(13,15)$ \\
$\alpha$ & $\mathcal{U}(0,1.5)$ & $\mathcal{U}(0,1.5)$ \\
\midrule
$A_{\rm cen}$ & $0$ & $\mathcal{TN}(0,0.2^2;[-1,1])$ \\
$A_{\rm sat}$ & $0$ & $\mathcal{TN}(0,0.2^2;[-1,1])$ \\
$\alpha_{\rm c}$ & $0$ & $\mathcal{U}(0,0.7)$ \\
$\alpha_{\rm s}$ & $1$ & $\mathcal{U}(0.2,2.0)$ \\
$F_{\rm sat}$ & $1$ & $\mathcal{U}(0.2,2.0)$ \\
\bottomrule
\end{tabular}
\end{table}

The LHID-disjoint
dataset split contains 64,000 training simulations (32,000 per model),
8,000 validation simulations and 8,000 frozen test simulations; the estimators
also reserve internal validation subsets from the training partition.
A single flow-matching model is fitted to the pooled training set, and 32,000
generated samples form a third class alongside the two equally sized real-model
classes. We train a five-member ensemble of three-output multiclass Evidence
Networks with the pairwise loss, using standardization from Equation~\ref{equ:standardize} for both the flow
and the networks.

\section{Architectures} \label{app:architectures}

We present the architectures of the neural networks used in this work. All networks are implemented in \texttt{PyTorch}~\citep[][]{pytorch_19}. We have also made use of the \texttt{numpy}~\citep[][]{numpy_20}, \texttt{scipy}~\citep[][]{scipy_20}, \texttt{sklearn}~\citep[][]{scikit-learn_11}, \texttt{torchdiffeq}~\citep{torchdiffeq_18}, and \texttt{matplotlib}~\citep[][]{matplotlib_07} libraries.

\paragraph{Software and data licenses} PyTorch, NumPy, SciPy, and scikit-learn are used under their BSD-3-Clause licenses; dynesty, pyddm, and torchdiffeq under the MIT license; Cobaya under LGPL-3.0-or-later with its stated additional conditions; jaxley under the Apache 2.0 licence; and Matplotlib under the Matplotlib license. The DESI data products are used under CC BY 4.0.

\subsection{Flow matching}\label{app:flow}

We parameterize the velocity field of the flow matching algorithm with a 256 unit, 4 hidden layer fully connected multilayer perceptron using SiLU activation functions. We train with uniform time sampling, using the \texttt{Adam}~\cite{kingma_14} optimizer with a batch size of 256 for the Hodgkin and DDM applications and 128 for all other applications. For the GMM and ODE models we use a learning rate of $10^{-4}$, for the BAO, DDM and Hodgkin models $10^{-3}$. For density estimation via integration of the flow, we use the \texttt{dopri5} integrator implemented in \texttt{torchdiffeq} to solve

\begin{equation}
    \log q_\phi(x|\mathcal{M}) = \log p(z) - \int_0^1 \nabla \cdot v_\phi(x_t, t)\, dt,
\end{equation}

where $x_t$ follows the learned ODE trajectory. We leave the impact of a thorough hyperparameter search and of a different optimizer to further study.

For the DDM application we use permutation equivariant DeepSets~\citep{zaheer_17}. First we apply a neural network to each dimension:

\begin{equation}
h_i = \phi(x_i, t),
\end{equation}

to the outputs of which we apply permutation invariant aggregation functions:

\begin{equation}
A = \big[\,\overline{h},\ \textstyle\max_i h_i,\ \min_i h_i,\ \mathrm{std}_i(h_i)\,\big].
\end{equation}

The velocity field is then learned from a concatenation of these aggregated features with the individual observation:

\begin{equation}
v_i = \rho\big(x_i,\ h_i,\ A,\ t\big).
\end{equation}

The neural networks $\phi$ and $\rho$ are parameterized as MLPs with 256 neurons in each of the 3 layers. The output dimension of $\phi$ is also 256.

\subsection{ENFR}

The evidence networks are fully connected neural networks. For the GMM example we use a rather large network with 1024 x 4 neurons with the SiLU activation function. For the BAO example we use a substantially smaller network with 128x2 neurons for MultiENFR and 128x3 neurons for SingleENFR with the ReLU activation function. For the ODE application we use 256x4 neurons with ReLU activation. For the Hodgkin application we use 128x2 neurons with ReLU activation. In all examples except for the GMM we train an ensemble of 5 evidence networks with equal weight, while the GMM test only shows single networks. Batch sizes are 128 (GMM,BAO), 256 (Hodgkin, DDM), and 1024 (ODE). 

We train using the \texttt{Adam}~\cite{kingma_14} optimizer with a learning rate of $10^{-4}$.

For the DDM experiment MultiENFR is a permutation invariant DeepSet~\citep{zaheer_17}. A first neural network delivers

\begin{equation}
    h_i = \phi(x_i),
\end{equation}

where $x_i$ are the simulated signed decision times. The output is then aggregated:

\begin{equation}
A = \big[\,\overline{h},\ \textstyle\max_i h_i,\ \min_i h_i,\ \mathrm{std}_i(h_i)\,\big],
\end{equation}

and fed into a second network

\begin{equation}
    y_j = \rho(A).
\end{equation}

$\rho$ and $\phi$ are parameterized with MLPs with two and three layers respectively, with 128 neurons in each layer, using a ReLU activation function. We train an equally weighted 5 member ensemble with learning rate $10^{-4}$ batch size 256 and patience 10 in early stopping.

\subsection{CTSM}\label{app:ctsm}

The CTSM models are also fully connected multilayer perceptrons. There are two different interpolants we use, the variance preserving bridge (VP):

\begin{equation}
    x_t =  \sqrt{1-t^2} z + t x_1 \quad \mathrm{with}\ z\sim\mathcal{N}(0,1),
\end{equation}

and the Schrödinger bridge (SB):

\begin{equation}
    x_t = (1-t) x_0 + t x_1 + \sigma \sqrt{t(1-t)} \epsilon \quad \mathrm{with}\ \epsilon\sim\mathcal{N}(0,1),
\end{equation}

where $x_0\sim q_\phi(x)$ is drawn from the flow reference and $x_1 \sim p(x|\mathcal{M})$ are data samples. We choose the noise scale $\sigma$ to be 1.

In all CTSM runs, the architectures have always been identical to the corresponding SingleENFR runs.

\section{Results of the GMM scaling experiment}

We show the results of the GMM scaling in Table~\ref{tab:GMM}. We further show an example plot of the improvement that comes from adding the evidence network in 128 dimensions in Figure~\ref{fig:GMM_true_pred}.

\begin{table}
	\caption{RMSE values of the estimated density $\log q(x)$ with respect to the true density $\log p(x)$ over 200 held out test draws (lower is better) averaged over the three seeds. The SingleENFR combination consistently outperforms the other models. The flow is shared at each seed.}
	\label{tab:GMM}
	\centering
	\begin{tabular}{lcccc} 
        \toprule
		Example  & Flow & CTSM-VP & SingleENFR & CTSM-SB + FlowRef\\
        \midrule
		GMM 64D & 0.76 & 2.14 &\bf{0.45} & 1.06 \\%
		GMM 128D & 2.30 & 7.70 & \bf{1.32} & 2.46 \\%
		GMM 256D & 6.61 & 27.87 & \bf{3.04} & 6.57 \\ %
		GMM 512D & 15.79 & 65.36 & \bf{14.44} & 15.87 \\%
		GMM 1024D & 34.99 & 163.21 & \bf{33.44} & 35.01 \\%
        \bottomrule
	\end{tabular}
\end{table}

\begin{figure}
    \centering
    \includegraphics[width=0.5\linewidth]{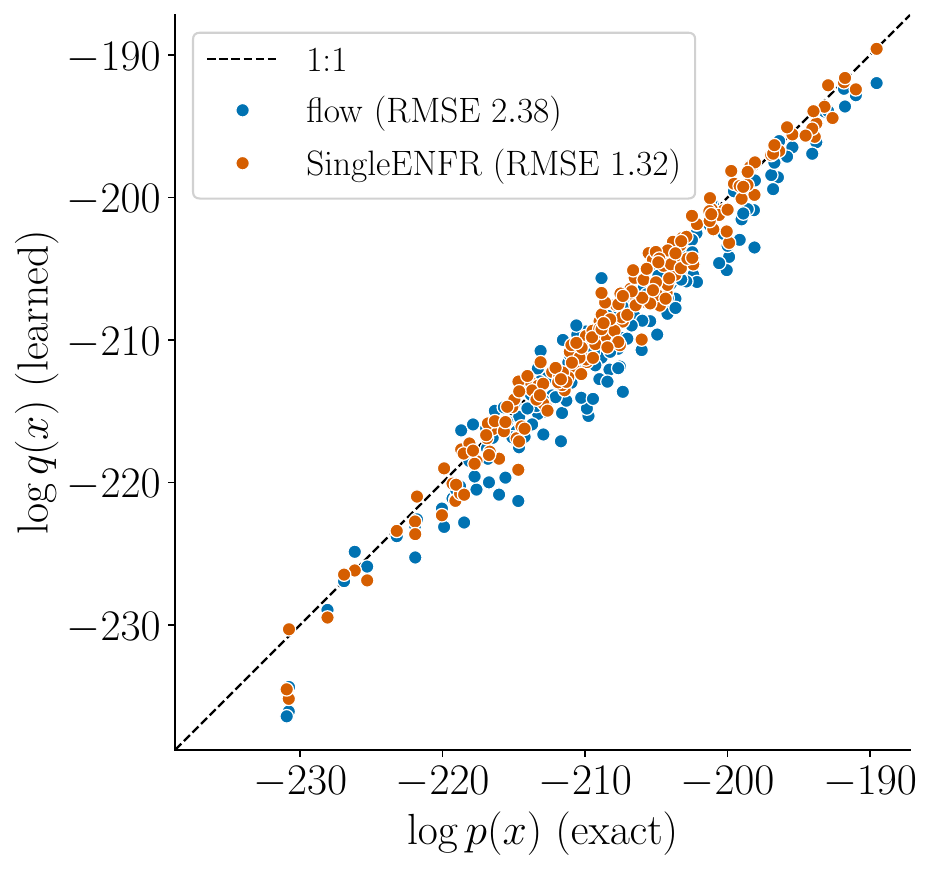}
    \caption{The performance on test points in 128 dimensions for the flow and SingleENFR.}
    \label{fig:GMM_true_pred}
\end{figure}

\section{Bayes factor analysis for DESI DR2 data}

Table~\ref{tab:bao_data_bf} shows the explicit comparison of our MultiENFR Bayes factors compared to the likelihood-based analysis.

\begin{table}
	\centering
	\caption{Bayes factors w.r.t. $\Lambda$CDM on the DESI DR2 data. The error of our simulation-based estimate is largely comparable to the error of the likelihood-based estimate in three of the four models.}
	\label{tab:bao_data_bf}
	\begin{tabular}{lccc} 
        \toprule
		Model & Likelihood Based Analysis & MultiENFR & $\Delta$ \\
		\midrule
        $w\mathrm{CDM}$ & -2.42$\pm$0.24 & -1.94 & +0.48 \\
        $w_0w_a\mathrm{CDM}$ & -1.64$\pm$0.24  & -1.40 &  +0.25 \\
        $\nu\mathrm{CDM}$ &-0.07$\pm$0.22 & 0.06 & +0.13 \\
        $\Omega_k\mathrm{CDM}$ & -1.89$\pm$0.23 & -1.65 & +0.24 \\
        \bottomrule
	\end{tabular}

\end{table}

\section{Proof of minimizer of evidence networks} \label{app:lpop}

We present the proof that the minimizer of the risk of the l-POP loss is indeed the Bayes factor if the models are weighted equally a priori.

\paragraph{Proposition.}
For a sample $(x,m)$ with $m\in\{0,1\}$, the l-POP exponential loss is
\[
\mathcal L(x,m)
=
\exp\left[
\left(\frac12-m\right)u(x)
\right],
\]
using the shorthand notation $u(x) = \mathcal{J}_\alpha(f_\psi(x))$. Assuming equal model sampling probabilities, the population optimum satisfies
\[
u^*(x)
=
\log\frac{p(x\mid \mathcal{M})}{p(x\mid \mathcal{M}_0)} = \log K,
\]
with the $^*$ superscript indicating at minimizing parameters $\psi = \hat\psi$.

\paragraph{Proof.}
Consider the population risk as a functional of \(u(\cdot)\). Under equal sampling probabilities for the two models,

\[
P(\mathcal M_0)=P(\mathcal M)=\frac12,
\]

the expected loss is

\[
\mathcal R[u]
=
\frac12
\int
\left[
p(x\mid \mathcal M_0)e^{u(x)/2}
+
p(x\mid \mathcal M)e^{-u(x)/2}
\right]dx.
\]

We proceed with standard variational calculus, differentiating with respect to some perturbation
\(\varepsilon\) at \(\varepsilon=0\):

\[
\delta\mathcal R[u;\eta]
=
\left.
\frac{d}{d\varepsilon}
\mathcal R[u+\varepsilon\eta]
\right|_{\varepsilon=0}.
\]

Thus,

\[
\delta\mathcal R[u;\eta]
=
\frac14
\int
\left[
p(x\mid \mathcal M_0)e^{u(x)/2}
-
p(x\mid \mathcal M)e^{-u(x)/2}
\right]
\eta(x)\,dx.
\]

At a minimizer \(u^*\), the variation must vanish so we find

\[
p(x\mid \mathcal M_0)e^{u^*(x)/2}
-
p(x\mid \mathcal M)e^{-u^*(x)/2}
=
0
\]

Hence,

\[
p(x\mid \mathcal M_0)e^{u^*(x)/2}
=
p(x\mid \mathcal M)e^{-u^*(x)/2}.
\]

Multiplying by \(e^{u^*(x)/2}\) gives

\[
p(x\mid \mathcal M_0)e^{u^*(x)}
=
p(x\mid \mathcal M),
\]

and therefore

\[
e^{u^*(x)}
=
\frac{p(x\mid \mathcal M)}
     {p(x\mid \mathcal M_0)}.
\]

Taking logarithms,

\[
u^*(x)
=
\log
\frac{p(x\mid \mathcal M)}
     {p(x\mid \mathcal M_0)}
=
\log K.
\]

Thus,

\[
\boxed{
u^*(x)=\log K
}.
\]

Without the shorthand this delivers

\[
\boxed{
\mathcal{J}_\alpha(f^*(x))
=
\log K
}.
\]

\section{EN correction to the flow under constant prediction} \label{app:en_const}

We use the l-POP loss EN to correct the density estimates of a generative flow. It is interesting to examine the case where the EN cannot learn anything from the data and just returns a minimizing constant, i.e. $f(x) = \mathrm{const}$. In this case we can show, that the EN correction asymptotically does not degrade the flow's evidence estimate under equal model weights.

Consider again the population risk of the l-POP loss of a constant predictor $u(x) = \mathcal{J}_\alpha(f_\psi(x)) = c$ under model priors which are reflected by the training class probabilities $p(\mathcal{M})$ and $p(\mathcal{M}_0)$:

\[
R[c]
=
\int p(\mathcal{M}_0) p(x\mid \mathcal M_0)e^{c/2}+
 p(\mathcal{M}) p(x\mid \mathcal M)e^{-c/2}\,dx
=
p(\mathcal{M}_0)e^{c/2}+p(\mathcal{M})e^{-c/2}.
\]

We consider the variation

\[
\delta\mathcal R[c;a]
=
\left.
\frac{d}{d\varepsilon}
\mathcal R[c+\varepsilon a]
\right|_{\epsilon=0},
\]

which delivers

\[
\delta\mathcal R[c;a]
=
\frac{a}{2}
\left[
p(\mathcal M_0)e^{c/2}
-
p(\mathcal M)e^{-c/2}
\right].
\]

At the variational optimum $c^*$, the variation must vanish so

\[
p(\mathcal M_0)e^{c^*/2}
-
p(\mathcal M)e^{-c^*/2}
=
0,
\]

which implies

\[
e^{c^*}
=
\frac{p(\mathcal M)}
     {p(\mathcal M_0)}.
\]

Hence the optimal constant correction is

\[
\boxed{
c^*
=
\log
\frac{p(\mathcal M)}
     {p(\mathcal M_0)}
}.
\]

For equal model weights,

\[
p(\mathcal M_0)=p(\mathcal M),
\]

and therefore

\[
\boxed{c^*=0}.
\]

The asymptotically optimal EN therefore does not degrade a well-trained flow in this case.

\section{Proof of minima of the Multiclass Evidence Network} \label{app:caten}

We present the proof that the multiclass evidence network loss function is minimized by the Bayes factors.

\paragraph{Proposition.}
Consider \(N\) models

\[
\mathcal M_1,\ldots,\mathcal M_N
\]

and define

\[
u_i(x)=\mathcal J_\alpha(f_i(x)).
\]

For a sample \(x\) generated from model \(\mathcal M_m\), consider the pairwise loss

\[
\mathcal L(x,m)
=
\frac{1}{N-1}
\sum_{j\neq m}
\exp\left[
-\frac12\left(u_m(x)-u_j(x)\right)
\right].
\]

Assume that the models are sampled with equal probability,

\[
p(\mathcal M_i)=\frac1N.
\]

Then the population-risk functional, up to the positive constant factor
\(1/[N(N-1)]\), is

\[
\mathcal R[\mathbf u]
=
\int
\sum_{i=1}^N
p(x\mid\mathcal M_i)
\sum_{j\neq i}
\exp\left[
-\frac12\left(u_i(x)-u_j(x)\right)
\right]dx.
\]

Define

\[
p_i(x)\equiv p(x\mid\mathcal M_i).
\]

Adding the diagonal terms \(i=j\), which contribute only

\[
\sum_i p_i(x)
\]

and are independent of \(\mathbf u\), we may equivalently minimize

\[
\widetilde{\mathcal R}[\mathbf u]
=
\int
\left(
\sum_i p_i(x)e^{-u_i(x)/2}
\right)
\left(
\sum_j e^{u_j(x)/2}
\right)dx.
\]

For compactness, define

\[
A(x)
=
\sum_i p_i(x)e^{-u_i(x)/2},
\qquad
B(x)
=
\sum_i e^{u_i(x)/2}.
\]

Then

\[
\widetilde{\mathcal R}[\mathbf u]
=
\int A(x)B(x)\,dx.
\]

Again we follow variation of the functional:

\[
\delta\widetilde{\mathcal R}[\mathbf u;\boldsymbol\eta]
=
\left.
\frac{d}{d\varepsilon}
\widetilde{\mathcal R}
[\mathbf u+\varepsilon\boldsymbol\eta]
\right|_{\varepsilon=0}.
\]

Using the product rule, we first apply this to each term separately:

\[
A_\varepsilon(x)
=
\sum_i
p_i(x)e^{-[u_i(x)+\varepsilon\eta_i(x)]/2},
\]

and therefore

\[
\left.
\frac{dA_\varepsilon}{d\varepsilon}
\right|_{\varepsilon=0}
=
-\frac12
\sum_i
p_i(x)e^{-u_i(x)/2}\eta_i(x).
\]

Similarly for the second term,

\[
B_\varepsilon(x)
=
\sum_i
e^{[u_i(x)+\varepsilon\eta_i(x)]/2},
\]

so that

\[
\left.
\frac{dB_\varepsilon}{d\varepsilon}
\right|_{\varepsilon=0}
=
\frac12
\sum_i
e^{u_i(x)/2}\eta_i(x).
\]

Now according to the product rule we can write

\[
\begin{aligned}
\delta\widetilde{\mathcal R}
&=
\int
\left[
\delta A(x)\,B(x)
+
A(x)\,\delta B(x)
\right]dx \\
&=
\frac12
\int
\sum_{k=1}^N
\left[
A(x)e^{u_k(x)/2}
-
B(x)p_k(x)e^{-u_k(x)/2}
\right]
\eta_k(x)\,dx.
\end{aligned}
\]

To find the extremum we set the variation to 0:
\[
\delta\widetilde{\mathcal R}
[\mathbf u^*;\boldsymbol\eta]
=
0.
\]

By the fundamental lemma of the calculus of variations, this implies, for
each \(k\),

\[
A(x)e^{u_k^*(x)/2}
-
B(x)p_k(x)e^{-u_k^*(x)/2}
=
0.
\]

Thus

\[
A(x)e^{u_k^*(x)/2}
=
B(x)p_k(x)e^{-u_k^*(x)/2}.
\]

Multiplying by \(e^{u_k^*(x)/2}\) gives

\[
A(x)e^{u_k^*(x)}
=
B(x)p_k(x),
\]

and therefore

\[
e^{u_k^*(x)}
=
p_k(x)\frac{B(x)}{A(x)}.
\]

Taking logarithms,

\[
u_k^*(x)
=
\log p_k(x)
+
\log\frac{B(x)}{A(x)}.
\]

The second term is independent of the model index \(k\). Hence the stationary
solutions have the form

\[
\boxed{
u_k^*(x)
=
\log p(x\mid\mathcal M_k)+c(x)
},
\]

where \(c(x)\) is a common, non-identifiable offset.

Taking the difference between any two outputs eliminates this common offset:

\[
u_i^*(x)-u_j^*(x)
=
\log p(x\mid\mathcal M_i)
-
\log p(x\mid\mathcal M_j).
\]

Therefore,

\[
\boxed{
u_i^*(x)-u_j^*(x)
=
\log
\frac{p(x\mid\mathcal M_i)}
     {p(x\mid\mathcal M_j)}
=
\log K_{ij}(x)
},
\]

and removing the shorthand notation

\[
\boxed{
\mathcal J_\alpha(f_i^*(x))
-
\mathcal J_\alpha(f_j^*(x))
=
\log K_{ij}(x)
}.
\]

To verify that this stationary family gives the global minimum, note that at
each \(x\),

\[
A(x)B(x)
=
\left(
\sum_i p_i(x)e^{-u_i(x)/2}
\right)
\left(
\sum_i e^{u_i(x)/2}
\right).
\]

By the Cauchy--Schwarz inequality,

\[
A(x)B(x)
\geq
\left(
\sum_i\sqrt{p_i(x)}
\right)^2.
\]

Equality holds if and only if

\[
\sqrt{p_i(x)}\,e^{-u_i(x)/4}
=
\lambda(x)e^{u_i(x)/4}
\]

for every \(i\), which is equivalent to

\[
u_i(x)=\log p_i(x)-2\log\lambda(x)\equiv \log p_i(x)+c(x).
\]

Thus the variational stationary family obtained above attains the pointwise
lower bound of the population-risk integrand and is therefore a global
minimizer. The common function \(c(x)\) remains undetermined because the loss
depends only on pairwise differences \(u_i-u_j\).

\end{document}